\theoremstyle{plain}
\newtheorem{theorem}{Theorem}[section]
\theoremstyle{definition}
\theoremstyle{remark}
\newcommand{\argmax}{\mathop{\mathrm{argmax}}\limits}
\newcommand{\argmin}{\mathop{\mathrm{argmin}}\limits}
\title{Beyond Ordinal Preferences: Why Alignment Needs Cardinal Human Feedback}
\author{%
  Parker Whitfill\\
  Department of Economics\\
  Massachusetts Institute of Technology\\
  Cambridge, MA\\
  \texttt{whitfill@mit.edu}
  \And
  Stewy Slocum\\
  Computer Science and Artificial Intelligence Laboratory (CSAIL)\\
  Massachusetts Institute of Technology\\
  Cambridge, MA\\
  \texttt{sslocum@csail.mit.edu}
}
\begin{document}

\maketitle

\begin{abstract}

Alignment techniques for LLMs rely on optimizing preference-based objectives -- where these preferences are typically elicited as ordinal, binary choices between responses. Recent work has focused on improving label quality or mitigating particular biases, but we identify a more fundamental limitation: these methods collect the wrong kind of data. We prove an impossibility result: no algorithm relying solely on ordinal comparisons can systematically recover the most preferred model. Intuitively, ordinal data lacks the information needed to resolve tradeoffs -- e.g., fixing a factual error on one prompt versus improving style on another. We show that selecting the optimal model requires recovering preferences over \emph{models} (rather than just responses), which can only be identified given cardinal feedback about response quality. To address this, we collect and publicly release a dataset of 25,000 cardinal judgments using willingness-to-pay elicitations, a well-established tool from experimental economics. Empirically, we find that incorporating cardinal feedback into preference fine-tuning allows models to prioritize high-impact improvements and outperform ordinal-only methods on downstream benchmarks, such as Arena-Hard.

%
\end{abstract}

\section{Introduction}

Optimizing preference-based metrics has become an important part of the fine-tuning process for LLMs. However, recent studies demonstrate that these metrics often reward superficial tricks such as replying at greater length, polishing style, or being sycophantic, sometimes without addressing factual errors or safety concerns \citep{singhal2023long, li2024style, pasch2025llmcontentmoderationuser, singh2025leaderboardillusion}. While previous work has focused on data quality \citep{zhao2024challengestrustworthyhumanevaluation}, temporal overfitting \citep{shirali2023theorydynamicbenchmarks}, and mitigating specific superficial biases \citep{park2024disentanglinglengthqualitydirect}, we identify a novel limitation of existing systems -- that they collect the wrong kind of preference data. When feedback is ordinal -- ``response A is better than response B", we show there is insufficient information to identify the most preferred model. An algorithm has no way to tell whether fixing a critical safety flaw on one prompt matters more than polishing wording on another, because both improvements are logged as a single ``win''.

Our main theoretical result formalizes this intuition: with access to only ordinal comparisons, no fine-tuning procedure can systematically output the most preferable, feasible model. Figure \ref{fig:headline} illustrates the idea: a model that eliminates a hazardous hallucination in a medical recommendation versus one with improved spelling and stylistic flair. With only ordinal preference data, the algorithm does not know to prioritize avoiding the medical error. Instead, we propose gathering cardinal feedback directly from humans, letting them assign higher importance to the safety fix.

Previous alignment work has avoided cardinal human feedback for fear of noise, calibration issues, and cognitive load \citep{christiano2017deep, casper2023open}. We tackle these issues with a simple, intuitive protocol borrowed from experimental economics: ask annotators for their willingness-to-pay (WTP) for a proposed improvement to an LLM completion. WTP questions have been validated in hundreds of lab and field studies as a robust way to elicit the strength of preferences without demanding technical expertise \citep{carson2005contingent, Schmidt2020HypotheticalBiasWTP, Cerda2024RenewablesWTP}. 

Using conversations from ChatbotArena \citep{lmsys-chatbot-arena} supplemented with Anthropic's HH dataset \citep{bai2022training}, we collect over $25,000$ human WTP judgments on LLM completions, forming the \textsc{CardinalPrefs} dataset\footnote{\textsc{CardinalPrefs} dataset available at \url{https://huggingface.co/datasets/cardinal-prefs/CardinalPrefs}}, which we publicly release to support future research. We find that incorporating this data into preference-tuning allows training to prioritize high-impact improvements and yields models that are rated as more preferable and score higher on downstream preference-based benchmarks. 

In summary, we provide the following contributions:
\begin{enumerate}
    \item We prove that any alignment algorithm that only utilizes ordinal feedback cannot systematically select the most preferable model. Intuitively, not accounting for preference strength makes it impossible to correctly resolve tradeoffs across prompts. 
        
    \item We show that existing ordinal feedback models like Bradley-Terry make implicit assumptions about an unobserved cardinal feedback signal, but that these assumptions do not match real-world data, and that explicit cardinal feedback measures this better. 
    
    \item We empirically demonstrate that training on cardinal feedback leads to stronger alignment with human preferences and consistent improvements on various preference-based metrics. We show these gains arise from prioritizing high-impact fixes, while ordinal preference training gives every comparison equal weight.
\end{enumerate}


\begin{figure}
    \centering
    \includegraphics[width=\linewidth]{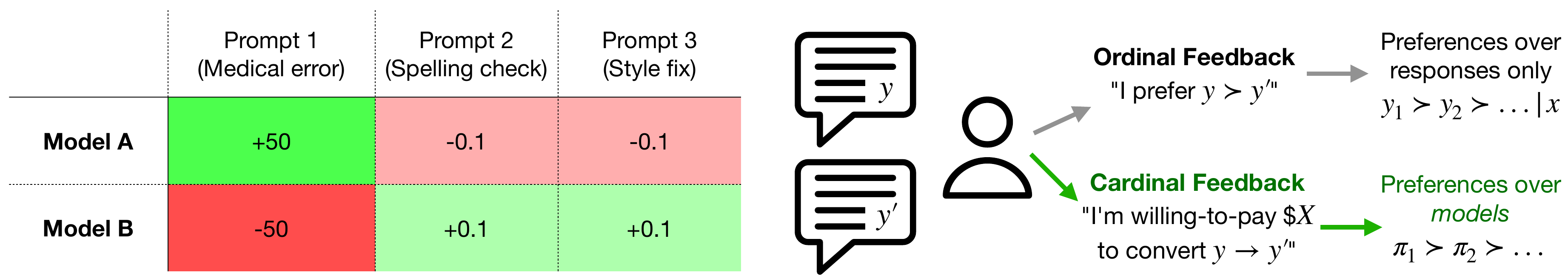}
    \caption{\textbf{Left: Ordinal feedback can select the wrong model.} Ordinal wins favor Model B on 2/3 prompts, yet cardinal feedback shows Model A delivers far higher overall utility by fixing a critical medical error. Intuitively, cardinal feedback allows for making correct tradeoffs across prompts. \textbf{Right: Cardinal feedback and preferences over models.} We elicit cardinal feedback through willingness-to-pay estimates out of a total fixed budget across all prompts. This yields a numeric utility difference instead of binary win/lose score. This additional information above ordinal data is necessary to identify model-level preferences. 
    }
    \label{fig:headline}
\end{figure}

\section{Related Work}\label{sec:related_work}

\textbf{Ordinal preference learning algorithms.}
Recent work has introduced a plethora of preference learning algorithms, which consume ordinal, binary feedback (e.g., RLHF \citep{christiano2017deep}, DPO \citep{rafailov2024direct}, and IPO \citep{azar2024general}), binary ratings (KTO \citep{ethayarajh2024kto}), linguistic principles (Constitutional AI \citep{bai2022constitutionalaiharmlessnessai}), and others. Most related to our work, recent papers such as \citep{gao2024rebel, amini2024direct} have discussed modifications to incorporate intensity of preference. However, while both papers try to adjust the algorithm to take advantage of differences in intensity of preference, they still get the intensity of preference from a reward model learned from ordinal feedback. Our novel contribution to this literature is changing the data collected. 

\textbf{Learning from scalar feedback.} Recognizing the greater expressivity of cardinal over ordinal feedback, previous work has used scalar human feedback to train reinforcement learning models \citep{wilde2021learningrewardfunctionsscale, knox2008tamer, macglashan2023interactivelearningpolicydependenthuman}. However, it is often not clear to human annotators how to quantify the success of an example, even after discretization (e.g., into a Likert scale) \citep{yannakakis2011ranking, christiano2017deep}. We contribute to this literature by introducing a simple way to elicit cardinal feedback via WTP in a modern, LLM setting.  

\textbf{Economic decision theory.} 

Our framework builds on classical economic decision theory \citep{debreu1960topological, savage1972foundations, anscombe1963definition, fishburn1970utility} - a literature that focuses on properties of preference relations, but we bring this theory to a new setting of LLMs. 

%

\section{Preliminaries}

Let $\mathcal{X}$ be a finite prompt space and $\mathcal{Y}$ be a finite response space. A language model $\pi: \mathcal{X} \to \Delta(\mathcal{Y})$ associates each prompt with a discrete distribution over responses, and we denote the set of all such models as $\Pi = \{\pi \mid \pi: \mathcal{X} \to \Delta(\mathcal{Y})\}$.

\subsection{Preferences}

Let $P(\cdot)$ be the set of complete, transitive, continuous and independent preference relations over a given set. We review these axioms in the appendix. 

Let $\succsim \in P(\Pi)$ be a preference relation defined over $\Pi$, the set of language models. This preference relation encodes a user's preferences over \emph{models} (e.g., do they prefer GPT-$4$o or Grok $3$). Our preferred interpretation is that $\succsim$ is the preference of the model developer.\footnote{Therefore, we are not facing a social choice problem of aggregating preferences across people, but rather trying to optimize a single preference, the developer's. }

In preference fine-tuning, it is common to elicit preferences over \textit{responses} for a fixed prompt. Let $\succsim_x \in P(\Delta(\mathcal{Y}))$ be a preference relation over \textit{responses} for a given prompt $x \in \mathcal{X}$.\footnote{We will abuse notation and say $y \succsim_x y'$ to refer to a lottery which puts probability $1$ on $y$ is preferred to a lottery which puts probability $1$ on $y'$.} We call these \emph{prompt-level preferences}. Prompt-level preferences are derived from model-level preferences by comparing two models that differ only with respect to a single prompt. See the appendix for details.

\subsection{Preference Fine-Tuning Algorithm}

Let $D_\succsim$ denote a preference dataset. Let $f$ be a preference fine-tuning algorithm that selects a language model from a set of feasible models $\mathcal{F} \subseteq \Pi$ based on the dataset $D_\succsim$. 
\[f(D_\succsim, \mathcal{F}) = \pi \quad \text{ where } \pi \in \mathcal{F}\]

We interpret $\mathcal{F} \subseteq \Pi$ as the set of models that are sufficiently close to the pretrained model, and that are representable given our neural network weights.\footnote{In most applications, KL regularization is applied as an explicit penalty; by Lagrange duality, this is equivalent to restricting $\mathcal{F}$ to policies within an $\varepsilon$ ball around the pretrained model. For mathematical simplicity, we will assume $\mathcal{F}$ is compact throughout the paper. }

Let us review RLHF as an example choice of $f$
\[f_{RLHF}(D_\succsim^\text{ordinal}, \mathcal{F}) = \argmax_{\pi \in \mathcal{F}} \sum_{x \in \mathcal{X}} \sum_{y \in \mathcal{Y}} \pi(y|x) \hat{r}(x, y; D_\succsim^\text{ordinal})
\]
where 
\begin{align*}
    D_\succsim^\text{ordinal} & = \{(x_i, y_i, y_i', \mathbbm{1}\{y_i \succsim_{x_i} y_i'\})\}_{i=1}^N \\
    \hat r(x, y; D_\succsim^\text{ordinal})
& \;\in\;
\arg\min_{r}\!
\sum_{(x_i,y,y')\in D_{\succsim}^{\text{ordinal}}}
\log\sigma\!\Bigl(
\bigl[2\cdot 1\{y \succsim_{x_i} y'\}-1\,\bigr]\,[\,r(x_i,y)-r(x_i,y')\,]
\Bigr)
\end{align*}

\section{Insufficiency of Ordinal Data}
In this section, we outline the central theoretical result of the paper. We show that no preference fine-tuning algorithm that only uses ordinal feedback can always select the most preferable, feasible model. This impossibility result holds for all methods that rely on ordinal data such as DPO and RLHF -- but also ODPO, REBEL, and others.
\begin{theorem}\label{ordinsufficient}
    Let $D_\succsim^\text{ordinal} = \{(x_i, y_i, y_i', \mathbbm{1}\{y_i \succsim_x y_i'\})\}_{i=1}^N$ for any $N$. Then there does not exist an algorithm $f$ such that for any feasible set of language models $\mathcal{F}$ and any possible preference $\succsim \in \mathcal{P}(\Pi)$, it outputs the most preferred model $f(D_\succsim^\text{ordinal}, \mathcal{F}) \succsim \pi$ for all $\pi \in \mathcal{F}$. 
\end{theorem}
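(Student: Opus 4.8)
The plan is a two-worlds (indistinguishability) argument. I will construct a single compact feasible set $\mathcal{F}$ together with two model-level preferences $\succsim^1,\succsim^2 \in \mathcal{P}(\Pi)$ that induce \emph{identical} prompt-level preferences $\succsim_x$ for every $x$, so that they generate the \emph{same} ordinal dataset no matter which comparisons are queried, and yet have \emph{different} $\succsim$-maximal elements within $\mathcal{F}$. Since any algorithm $f$ receives only $(D_\succsim^{\text{ordinal}},\mathcal{F})$ as input, it must return the same model in both worlds, and hence fails to output the most preferred model in at least one of them, contradicting the claimed universality.

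First I would observe that ordinal data is a function of the prompt-level preferences alone: every tuple in $D_\succsim^{\text{ordinal}}$ is the bit $\mathbbm{1}\{y_i \succsim_{x_i} y_i'\}$, so if $\succsim^1$ and $\succsim^2$ agree on all comparisons $y \succsim_x y'$ (for $x\in\mathcal X$, $y,y'\in\mathcal Y$), then for every $N$ and every choice of queried triples the two datasets are literally equal. Because $\mathcal X$ and $\mathcal Y$ are finite, one may as well take the dataset to enumerate all triples; this only strengthens the conclusion and handles the ``for any $N$'' quantifier.

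Next I would build the two preferences. Take $\mathcal X=\{x_1,x_2\}$, $\mathcal Y=\{1,2\}$, and identify a model $\pi$ with the pair $(p_1,p_2)=(\pi(1\mid x_1),\pi(1\mid x_2))\in[0,1]^2$. Let $\succsim^1$ be represented by the linear utility $U_1(\pi)=p_1+2p_2$ and $\succsim^2$ by $U_2(\pi)=2p_1+p_2$; being represented by continuous linear functionals on the convex set of models $\Pi$, both satisfy completeness, transitivity, continuity, and independence, so $\succsim^1,\succsim^2\in\mathcal P(\Pi)$. Applying the appendix's construction, the prompt-level preference extracted from such a linear $U$ orders lotteries over $\mathcal Y$ by the response coefficients of $U$; since on each of $x_1,x_2$ both $U_1$ and $U_2$ assign response $1$ a strictly larger coefficient than response $2$, the two preferences induce the same prompt-level relation ($1\succ_x 2$ on both prompts), hence — by the previous step — the same ordinal dataset. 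Finally take $\mathcal F=\{\pi_A,\pi_B\}$ with $\pi_A=(1,0)$ and $\pi_B=(0,1)$ (finite, hence compact). Then $U_1(\pi_A)=1<2=U_1(\pi_B)$ while $U_2(\pi_A)=2>1=U_2(\pi_B)$, so $\pi_B$ is the unique $\succsim^1$-best model in $\mathcal F$ and $\pi_A$ the unique $\succsim^2$-best model in $\mathcal F$. A correct $f$ would therefore have to output $\pi_B$ in world $1$ and $\pi_A$ in world $2$ on one and the same input, which is impossible (and randomizing $f$ does not help, since a fixed output distribution cannot be a point mass on two distinct models).

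The mathematics here is light, so the main care-points rather than obstacles are: (a) being precise that $D_\succsim^{\text{ordinal}}$ encodes only the family $(\succsim_x)_{x}$ and nothing about cross-prompt intensities; (b) checking that the appendix's derivation of prompt-level preferences, applied to $U_1$ and $U_2$, indeed yields identical $\succsim_x$ — this is the formal embodiment of the informal point that ordinal comparisons pin down the within-prompt ordering but leave free the relative scale of $v(x_1,\cdot)$ versus $v(x_2,\cdot)$; and (c) confirming $\succsim^1,\succsim^2$ genuinely lie in $\mathcal P(\Pi)$. If one wants the example to mirror Figure~\ref{fig:headline} more closely, one can enlarge $\mathcal X$ and inflate the coefficient on the ``critical'' prompt, but two prompts already suffice.
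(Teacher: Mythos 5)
Your proposal is correct and follows essentially the same route as the paper's proof: two preferences that agree on every prompt-level comparison (hence generate identical ordinal datasets) but rank the two feasible models oppositely, forcing any $f$ to fail in one of the two worlds. The only difference is that you exhibit explicit linear utilities $U_1, U_2$ witnessing that such preferences exist in $\mathcal{P}(\Pi)$, whereas the paper simply asserts their existence; your version is, if anything, slightly more complete on that point.
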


The intuition for this result is that there is an identification problem,  $\succsim$ is not identified from any dataset of ordinal feedback, which prevents any algorithm from systematically selecting the optimal model according to $\succsim$.\footnote{The proof constructs a case of identification failure. } Returning to our Figure 1 example, the ordinal wins alone do not reveal that model A is better as they do not communicate that prompt $1$ is a critical medical prompt that is crucial to avoid hallucinations on. We need to know how $\succsim$ tradeoffs performance on different prompts and response quality differences, but that information is not in the ordinal feedback. 

Furthermore, we show this tradeoff problem is not just a theoretical concern. During a DPO training run, we track individual sample losses across $100$ validation set points. Figure \ref{fig:init_heatmap} reveals clear tradeoffs across prompts, with performance consistently degrading throughout training for some samples -- suggesting that standard preference fine-tuning methods are implicitly making tradeoffs behind the scenes. However, these tradeoffs are being made without any data on how we ought to make tradeoffs, it is simply being done arbitrarily. 

\begin{figure}[h!]
    \centering
    \includegraphics[width=0.5\columnwidth]{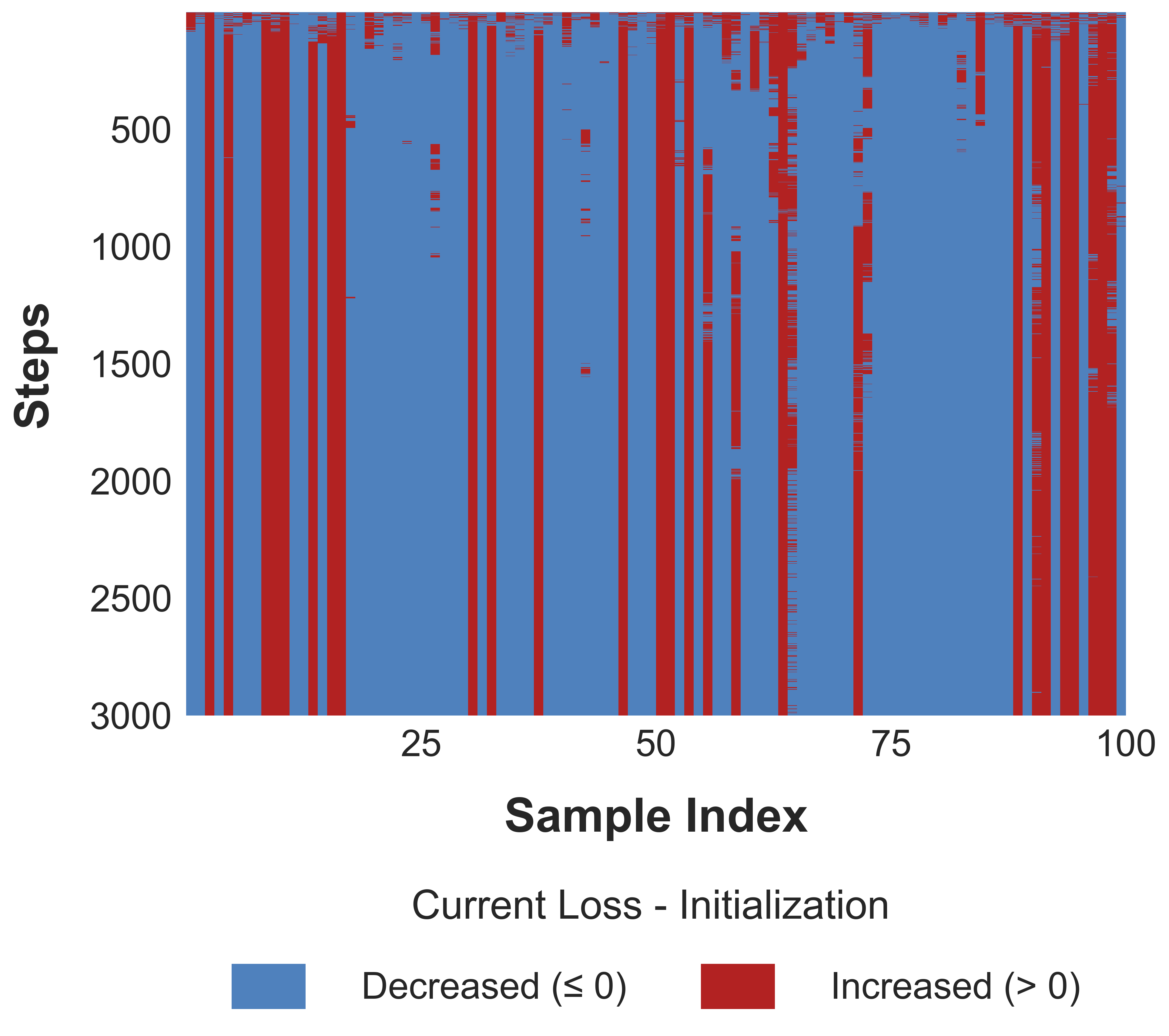}
    \caption{\textbf{DPO optimization performs sample-level tradeoffs.} Individual sample-level change in loss versus initialization in DPO. Blue indicates improvement in loss, red indicates degradation. While overall DPO loss decreases, 27\% of prompts have a higher loss than at initialization. This indicates consistent sample-level alignment tradeoffs being made during optimization.}
    \label{fig:init_heatmap}
\end{figure}

One principled way to navigate these tradeoffs is by using cardinal utility, such as our example in Figure $1$. Since $100 > .2 + .2$, then it is worth trading-off the second two prompts for the first.

\subsection{Against Using Bradley-Terry for Cardinal Utility}

In RLHF/DPO, instead of eliciting cardinal values directly from labelers to navigate these tradeoffs, cardinal values are inferred from ordinal values via Bradley-Terry. In this section, we argue that Bradley-Terry is unlikely to be accurately capturing these cardinal values. 

The Bradley-Terry model assumes that $\succsim_{x}$ are stochastic and the following equation holds
\[
p(y_2 \succsim_{x} y_1) = \frac{1}{1 + e^{r(x, y_1) - r(x, y_2) }}
\]
Therefore, it assumes we can tell how big an improvement $y_1$ is over $y_2$ by observing the fraction of samples where $y_2$ is picked over $y_1$.

From a conceptual perspective, choice frequencies can severely misrepresent the actual importance of improvements. Consider two scenarios that both generate near 100\% choice frequencies: first, where $y_1$ is identical to $y_2$ but without an obvious spelling error, and second, where $y_1$ provides accurate health information while $y_2$ contains dangerously incorrect medical advice. The Bradley-Terry model would assign equal importance to these improvements since they produce identical choice frequencies. Yet there is clearly a vast difference in the actual significance of these improvements that should be reflected in our preference strength measure.

Even if in theory one could recover cardinal values by repeatedly sampling preferences between the same response pairs, real preference datasets typically contain only a single comparison between any $y_1$ and $y_2$ for a given prompt $x$. With a single data-point, it is impossible to recover the fraction of times $y_1$ would be chosen over $y_2$ if we repeated sampling. 

Finally, our experimental results in Section $5.1$ provide empirical evidence that the Bradley-Terry parametric assumption does not hold, and that they do a poor job of predicting cardinal feedback provided directly via labelers. 

\subsection{Eliciting Cardinal Utility via Willingness to Pay}
Suppose we want to keep the maximization of mean reward specification given by RLHF/DPO, but we want to avoid theorem \ref{ordinsufficient}. Our strategy is to elicit cardinal data directly from labelers so we can infer cardinal utility  $r(x, y)$. 

This cardinal data will determine what tradeoffs to make. Crucially, if $r(x, y') - r(x, y) = 5$ then we are literally saying it is worth exactly $5$ improvements worth $r(x, y') - r(x, y) = 1$. Basic discrete Likert scales do not satisfy this property as $1/5$ is not meaningfully $5$ times less good than $5/5$. 

Instead, we instead need a numeric that is comparable across prompts and responses whose scale is cardinally meaningful. One such numeric is money. It is a well-established tradition in economics to use willingness-to-pay (WTP) to elicit people's cardinal utility for various goods. The advantage of money is that it is cardinally meaningful- $\$5$ is really worth $5$ times more than $\$1$.\footnote{Assuming that utility is quasi-linear in money, which we will assume throughout the paper. } Moreover, its scale offers a natural calibration across labelers. By Appendix Theorem \ref{affine}, it is without loss to multiply/divide $r$ by any constant, so we can divide by the standard deviation of each labeler's values to calibrate. Money also has a side benefit of being intuitive and familiar to labelers.

Therefore, we propose a small modification to RLHF/DPO. Instead of just collecting $D_\succsim^\text{ordinal}$, collect \[
D_\succsim^\text{WTP} = \{x_i, y_i, y_i', w_i\}_{i = 1}^N
\] where $w_i$ denotes the WTP to go from response $y_i$ to $y_i'$ in response to prompt $x_i$.\footnote{In an ideal setting, you could actually collect money, but for our case, we use simple stated WTP. } Then instead of choosing $r$ to minimize cross-entropy, we minimize squared error distance from $w_i$. 
\begin{equation}\label{rmin}
    \hat{r}(x_i, y_i; D_\succsim^\text{WTP}) \in \argmin_{r} \sum_{i =1 }^N \bigg(r(x_i, y_i') - r(x_i, y_i) - w_i\bigg)^2
\end{equation}

Finally, we maximize mean reward. Call this Cardinal Reinforcement Learning from Human Feedback (CRLHF). The following shows that in the data limit, CRLHF will always select the optimal model. 
\begin{theorem}\label{sufficient}
    Let $D_\succsim^\text{WTP}$ and suppose $N$ is large enough to cover all $x \in \mathcal{X}, y \in \mathcal{Y}, y' \in \mathcal{Y}.$ Let 
    \begin{equation}\label{max}
        f(D_\succsim^\text{WTP}, \mathcal{F}) =  
    \argmax_{\pi \in \mathcal{F}} \sum_{x \in \mathcal{X}} \sum_{y \in \mathcal{Y}} \pi(y|x)
    \hat{r}(x_i, y_i; D_\succsim^\text{WTP})
    \end{equation}
    Then, for any possible preferences $\succsim \in \mathcal{P}(\Pi)$ and any feasible set of language models $\mathcal{F}$, $f(D_\succsim^\text{WTP}, \mathcal{F}) \succsim \pi$ for all $\pi \in \mathcal{F}$. In other words, RLHF with a reward trained via MSE loss on willingness-to-pay estimates always produces the most preferred model in the data limit. 
\end{theorem}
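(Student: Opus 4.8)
The plan is to extract the von Neumann--Morgenstern utility underlying $\succsim$, argue that the willingness-to-pay data pins this utility down up to a per-prompt additive constant, and then observe that such constants are invisible to the mean-reward maximization in \eqref{max}. First I would invoke the mixture-space (von Neumann--Morgenstern) representation theorem recorded in Appendix Theorem~\ref{affine}: since $\succsim\in\mathcal P(\Pi)$ is complete, transitive, continuous, and independent, and $\Pi\cong\prod_{x\in\mathcal X}\Delta(\mathcal Y)$ is a mixture set under coordinatewise mixing of models, $\succsim$ is represented by an affine functional $U:\Pi\to\mathbb R$. An affine functional on this polytope is affine in the coordinates $(\pi(y|x))_{x,y}$, and after absorbing its constant term using the per-prompt constraints $\sum_y\pi(y|x)=1$ it takes the form $U(\pi)=\sum_{x\in\mathcal X}\sum_{y\in\mathcal Y}\pi(y|x)\,r^\star(x,y)$ for some $r^\star:\mathcal X\times\mathcal Y\to\mathbb R$, with $r^\star$ unique up to $r^\star(x,y)\mapsto a\,r^\star(x,y)+c_x$ for $a>0$ and arbitrary constants $c_x$. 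Comparing two models that agree off a single prompt recovers the prompt-level preferences, $y\succsim_x y'\iff r^\star(x,y)\ge r^\star(x,y')$.

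Next, using the maintained assumption that the developer's utility is quasi-linear in money, the willingness-to-pay $w_i$ for the model to emit $y_i'$ rather than $y_i$ on prompt $x_i$ equals the resulting utility change, i.e.\ $w_i=r^\star(x_i,y_i')-r^\star(x_i,y_i)$; the per-prompt gauge constant $c_{x_i}$ cancels in this difference, so $w_i$ is well defined regardless of the chosen representative, and it moreover fixes the scale $a$ in monetary units. Because $N$ is large enough that $D_\succsim^{\text{WTP}}$ covers every triple $(x,y,y')$, we thereby know $r^\star(x,y')-r^\star(x,y)$ for all of them. Then I would analyze \eqref{rmin}: taking $r=r^\star$ makes its objective vanish, so every minimizer $\hat r$ must satisfy $\hat r(x,y')-\hat r(x,y)=r^\star(x,y')-r^\star(x,y)$ for all covered triples; fixing any reference response $y_0$ and writing $c_x:=\hat r(x,y_0)-r^\star(x,y_0)$, this forces $\hat r(x,y)=r^\star(x,y)+c_x$ for all $x,y$, and conversely any such $\hat r$ is a minimizer. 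Thus the minimizers of \eqref{rmin} are exactly the representatives of $r^\star$.

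Finally, for any $\pi\in\mathcal F$,
\[
\sum_{x\in\mathcal X}\sum_{y\in\mathcal Y}\pi(y|x)\,\hat r(x,y)=\sum_{x\in\mathcal X}\sum_{y\in\mathcal Y}\pi(y|x)\,r^\star(x,y)+\sum_{x\in\mathcal X}c_x=U(\pi)+\sum_{x\in\mathcal X}c_x,
\]
and $\sum_{x}c_x$ does not depend on $\pi$ (nor does the set of maximizers depend on which minimizer $\hat r$ was selected). Hence the model chosen by \eqref{max} also maximizes $U$ over $\mathcal F$; since $U$ is continuous and $\mathcal F$ is compact this maximizer exists, and any such $\pi^\star$ satisfies $U(\pi^\star)\ge U(\pi)$, equivalently $f(D_\succsim^{\text{WTP}},\mathcal F)=\pi^\star\succsim\pi$, for every $\pi\in\mathcal F$, which is the claim.

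I expect the main obstacle to be the second step: one must verify that the WTP elicitation really returns the vNM utility \emph{difference} (this is exactly where quasi-linearity in money is used), and that the leftover per-prompt additive freedom is precisely the gauge that $\succsim$ cannot detect --- if WTP instead reported a nonlinear transform of the utility gap, or differences not expressible through a common per-prompt constant, the final step would break. A secondary subtlety is the meaning of ``data limit'': under the exact-data reading matching the hypothesis the argument is as above, whereas if the $w_i$ carry mean-zero noise one should run Steps two and three on the population MSE objective, whose minimizer is the conditional mean $\mathbb{E}[w_i\mid x_i,y_i,y_i']=r^\star(x_i,y_i')-r^\star(x_i,y_i)$, after which the conclusion is unchanged. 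The remaining ingredients --- the cited representation theorem, the linear-algebra characterization of the minimizers, and the invariance of $\argmax$ under a $\pi$-independent shift --- are routine.
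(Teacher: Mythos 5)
Your proof is correct and follows essentially the same route as the paper's: vNM/additively-separable representation of $\succsim$, quasi-linearity turning WTP into utility differences, exact recovery of $r^\star$ from the MSE objective up to transformations that leave the argmax unchanged, and compactness for existence. The only quibble is that the representation theorem you want is Appendix Theorem~\ref{kreps} (Theorem~\ref{affine} is the uniqueness statement); otherwise your argument is, if anything, more careful than the paper's about which residual freedom (scale pinned by money, per-prompt additive constants remaining) survives the regression step.
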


Alternatively, there is a DPO style objective that solves for the same policy as above under some assumptions \citep{rafailov2024direct}. 
\begin{equation}
\label{CDPO}
\begin{aligned}
\argmin_{\pi^*} \sum_{D_\succsim^\text{WTP}} \Bigl(\beta \log \tfrac{\pi^*(y_i'|x_i)}{\pi_{\text{ref}}(y_i'|x_i)}
        - \beta \log \tfrac{\pi^*(y_i|x_i)}{\pi_{\text{ref}}(y_i|x_i)} - w_i\Bigr)^2
\end{aligned}
\end{equation}

Then under the same conditions as those listed by \citet{rafailov2024direct}, the solution to Equation \ref{max} and Equation \ref{CDPO} must be the same. We call this Cardinal Direct Preference Optimization (CDPO).\footnote{This regression-based procedure matches \cite{gao2024rebel}. The difference is we recommend optimizing against cardinal feedback from labelers directly, instead of from a reward model estimated via Bradley-Terry. }

\section{Experiments}

\subsection{\textsc{CardinalPrefs} Dataset and Cardinal Label Quality} \label{subsec:cardinal_label_quality}

A natural concern with using cardinal data is that it is too noisy or difficult for labelers to provide, and inconsistent across labelers \citep{christiano2017deep}. 
In theory, our approach partially alleviates these concerns because labels are still relative $e.g., $ $y$ vs $y'$ and because money gives labelers a familiar, clear scale to use. In this section, we check empirically if our WTP scheme can elicit high-quality cardinal data at scale, with an eye on performance relative to ordinal data with Bradley-Terry.

We elicit WTP values from five labelers across over $25,000$ tuples of prompts and response pairs chosen from Anthropic's HHH dataset \citep{bai2022training} and Chatbot Arena comparisons \citep{chiang2024chatbot} to form the \textsc{CardinalPrefs} dataset. Anecdotally, the labelers reported that providing WTP made each sample took roughly $6-10\%$ longer than purely ordinal labels. With this data in hand, we perform 3 data quality checks. 

\begin{figure}[h]
    \centering
    \includegraphics[width=0.60\columnwidth]{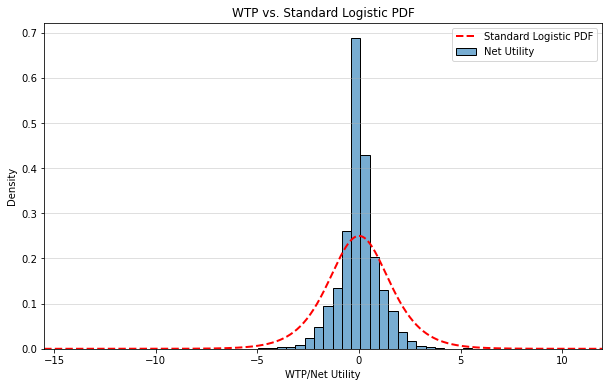}
    \caption{Distribution of WTP Values}
    \label{WTP_dist}
\end{figure}

\textbf{Figure \ref{WTP_dist} shows that the distribution of cardinal labels is smooth across multiple labelers, but does not match Bradley-Terry assumptions.} Aggregating cardinal data across multiple labelers introduces the possibility of improper normalization across raters. However, in Figure \ref{WTP_dist}, the distribution after normalization is unimodal with fast decaying tails, indicating that in aggregate these problems do not appear. Second, Bradley-Terry implies the distribution of the cardinal values should be logistic, but empirically we can see actual people's cardinal values are more peaked and have faster decaying tails. This is empirical evidence against the parametric assumption in Bradley-Terry. 

\textbf{Figure \ref{fig:utility-examples} shows  two randomly selected draws from the dataset which are chosen to be representative cases of "high" and "low" labeled WTP - we find intuitively that the high WTP case is much more important. } In the high WTP case, the dispreferred response does not even answer the question while the preferred response gives a complete answer. In the low WTP case both answers respond to the prompt accurately but the preferred response is more formal and informative. This gives some sanity check that labelers can provide cardinal data that is meaningful. 

\begin{figure*}[h!]
\centering
\begin{tabular}{|p{0.15\textwidth}|p{0.4\textwidth}|p{0.4\textwidth}|}
\hline
& \multicolumn{1}{c|}{\textbf{High Willingness to Pay (6.34)}} & \multicolumn{1}{c|}{\textbf{Low Willingness to Pay (0.13)}} \\
\hline
\textbf{Prompt} & What are some common data structures? & What do people mean when they talk about the "nuclear family"? \\
\hline
\textbf{Preferred} & \small Data structures are fundamental concepts in computer science and programming. (...)

1. \textbf{Arrays}: A collection of elements identified by index or key.
2. \textbf{Linked Lists}: A linear collection of nodes with data and references.
3. \textbf{Stacks}: LIFO data structures with push/pop operations. (...)
& \small The nuclear family is a term used to describe an intimate family unit consisting of a married couple and their dependent children. This term derives from the description of the central part of the atomic structure of an atom. (...) \\
\hline
\textbf{Dispreferred} & \small \#NAME? & \small Oh that's a broad topic! The "nuclear family" generally refers to the traditional nuclear family that most people grew up in, consisting of a mother and a father, and children. (...) \\
\hline
\end{tabular}
\caption{An example of a high and low WTP labels. These examples are randomly chosen from the dataset filtered to high WTP and low WTP cases respectively. }
\label{fig:utility-examples}
\end{figure*}

As our final data-quality check, we compute the implicit reward estimate from the cardinal data directly via CDPO and via Bradley-Terry via DPO. With these values, we compute the mean-squared error (MSE) of the implicit reward differences vs. the WTP data on a held-out set. If WTP data only added noise and no signal, we would expect similar MSE from training on ordinal data and on the cardinal data. Similarly, if Bradley-Terry accurately captured preference intensity already in ordinal data, then we would also expect similar MSE across both methods. \textbf{We find the cardinal data's MSE is $\textbf{55.92}\%$ smaller than the ordinal data's, indicating meaningful signal increase over ordinal data with Bradley-Terry.}

\subsection{DPO vs CDPO Experiments}

In this section we empirically test if adding in cardinal data boosts performance in the preference fine-tuning pipeline. We focus on comparing DPO and CDPO as it is computationally cheaper than RLHF and CRLHF. A central challenge with these experiments is evaluation. Our goal is to optimize $\succsim$, so the natural benchmark is $\succsim$; however, $\succsim$ is very difficult/expensive to measure. For example, it may take a labeler weeks to decide if they prefer GPT-$4$o or Grok $3$. Therefore, we utilize a wide variety of experiments to triangulate the effectiveness of using cardinal data. 

\subsection*{Experiment 1: Real Data, Simplified Training Setting}

Our first experiment involves a simplified setting where the set of feasible models is small enough such that we can elicit model-level preferences from labelers. This allows us to directly measure if using WTP data makes the resulting model more preferable. We find exploiting cardinal data leads to significantly higher probability of the resulting model being more preferable. 

Let $\pi_\theta \coloneq \pi^{(1)}\theta + \pi^{(2)}(1 - \theta)$ and $\mathcal{F}= \{\pi_\theta \mid \theta \in [0, 1]\}$ where $\pi^{(1)}$ and $\pi^{(2)}$ only differ on $3$ prompts. Because $\pi^{(1)}$ and $\pi^{(2)}$ only differ on three prompts, it is feasible to elicit from labelers if $ \pi^{(1)} \succsim \pi^{(2)}$, their model-level preferences. 

The experiment proceeds as follows. First, we hired a set of multiple experienced labelers. Second, we had the labelers give both ordinal feedback and WTP at the prompt-level. Third, we set up objective functions for DPO following \citet{rafailov2024direct} and CDPO following Equation \ref{CDPO} on the prompt-level feedback\footnote{We set the reference model as $\pi_\frac{1}{2}$.}. Optimization occurs along a single parameter $\theta$, and can be solved via gradient descent. Fourth, we elicit model-level preferences from the labeler between $\pi^{(1)}$ and $\pi^{(2)}$ e.g., if $ \pi^{(1)} \succsim \pi^{(2)}$. Finally, we compare the selected models by DPO and CDPO to the labeler's preferences. We say CDPO/DPO selects the optimal model if it puts more weight on the preferred model e.g., $\pi_\theta$ has $\theta > \frac{1}{2}$ where $\pi^{(1)} \succsim \pi^{(2)}$. 

\begin{table}[h]
\centering

\caption{CDPO Selects the Optimal Model More Often than DPO}\label{tab:select_optimal}
\label{tab:comparison}
\small
\begin{tabular}{lcc}
\toprule
        & \multicolumn{2}{c}{Selects Optimal Model}\\
\cmidrule(lr){2-3}
\textbf{Method} & Full sample & Where CDPO and DPO Disagree\\
\midrule
CDPO   & 90.27\% & 91.67\%\\
       & (1.48\%) & (3.99\%)\\[4pt]
DPO    & 83.29\% & 33.33\%\\
       & (1.86\%) & (6.80\%)\\[4pt]
\midrule
Diff.  & 6.98\%*** & 58.33\%***\\
       & (1.44\%) & (9.24\%)\\
\bottomrule
\end{tabular}
\begin{flushleft}   
\small\emph{Notes:} $N=401$ (full sample), $N=48$ (sample where DPO and CDPO disagree).
$^{*}p<0.1$, $^{**}p<0.05$, $^{***}p<0.01$. “Selects Optimal Model’’ is the percentage of times the method chooses a model weakly preferred to the alternative; standard errors in parentheses.
\end{flushleft}
\end{table}

\textbf{Table \ref{tab:select_optimal} shows that CDPO selects the optimal model significantly more often than DPO.} The first column shows that CDPO selects the optimal model almost $7$ percentage points more often than DPO. However, this masks many cases where CDPO and DPO select the same model. The second column shows CDPO outperforms DPO by over $58$ percentage points in the sample where CDPO and DPO disagree.

This experiment shows that, taking into account all the noise, heterogeneity, etc. in cardinal feedback, on net it helps the fine-tuning algorithm output the preferred model significantly more often. 

\subsection*{Experiment 2: Simulated Data, Real Training}
In this experiment, we use simulated preference data from a ground-truth reward model in the standard Chatbot fine-tuning setting. We find that CDPO results in $50\%$ higher mean ground-truth reward than DPO. 

The experiment proceeds as follows. First, we take simulated preferences set by ground-truth reward model \citep{liu2024skywork}. We use a dataset of pairs of responses from Ultrafeedback-200k, a large preference dataset covering a broad suite of chat and reasoning tasks \citep{cui2024ultrafeedbackboostinglanguagemodels}. We then compute the ground-truth rewards for each response $r^{GT}(x,y_1)$ and $r^{GT}(x,y_2)$. For CDPO, we label directly with the reward margin $r^{GT}(x,y_1) - r^{GT}(x,y_2)$. For DPO, we label directly with the sign of $r^{GT}(x,y_1) - r^{GT}(x,y_2)$. Second, we replicate the Zephyr fine-tuning recipe \citep{tunstall2023zephyrdirectdistillationlm}. As our base model, we use a Mistral-7B model \citep{huggingface_mistral7b_sft_beta} trained with supervised fine-tuning on the UltraChat dataset \citep{ding2023enhancingchatlanguagemodels}. Third, we set up objective functions for DPO following \citet{rafailov2024direct} and CDPO following Equation \ref{CDPO}. Let $\pi_{\text{DPO}}$ and $\pi_{\text{CDPO}}$ denote the maximizer we get from DPO and CDPO. We fine-tune the Mistral-7B model with LoRA for one epoch (see Appendix \ref{appendix:training} for details). Fourth, we score generated text to estimate mean reward. 

\begin{figure*}[h!]
    \centering
    \begin{minipage}[t]{0.48\textwidth}
        \centering
        \includegraphics[width=\linewidth]{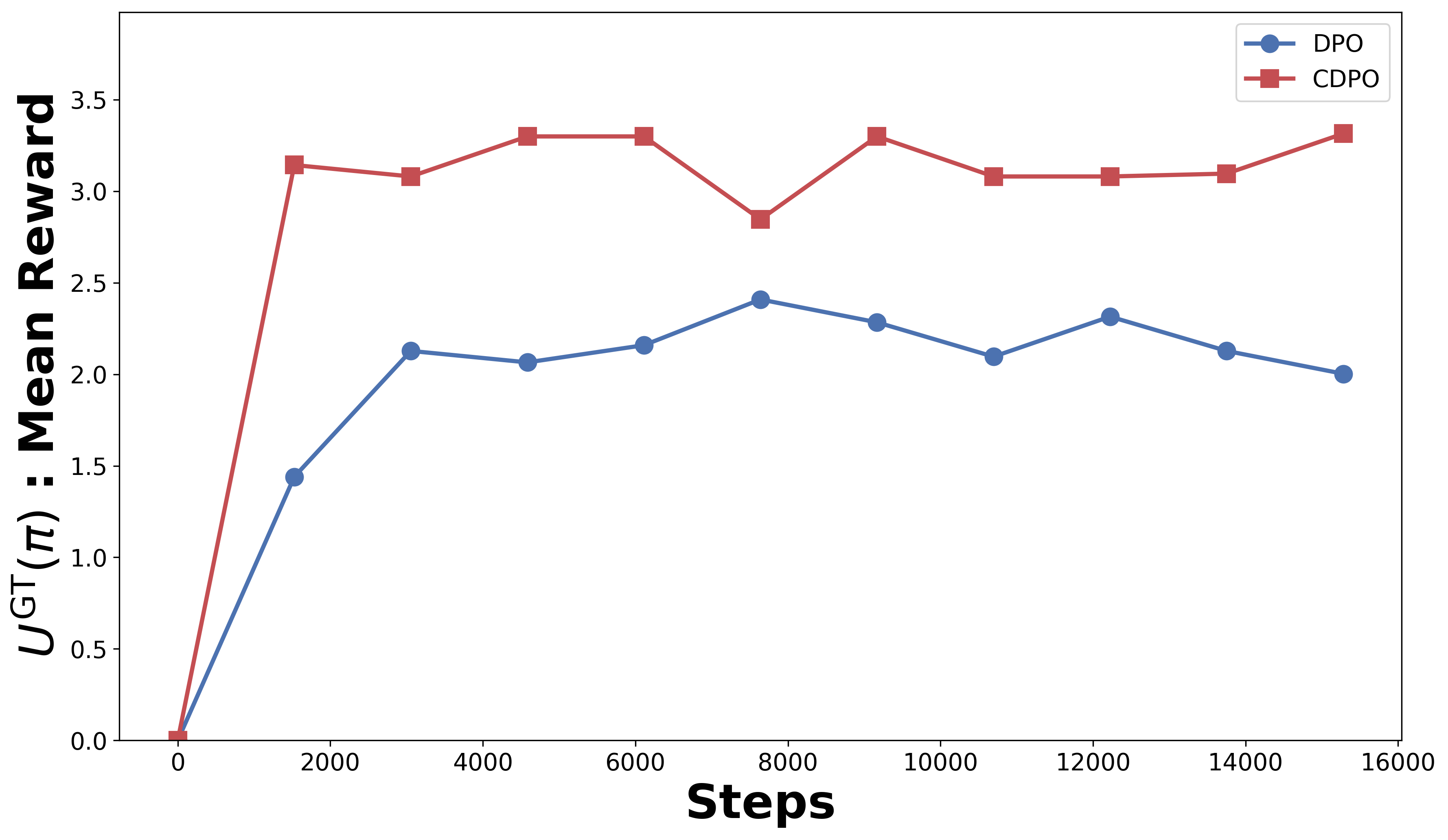}
        \label{fig:mean_reward}
    \end{minipage}
    \hfill
    \begin{minipage}[t]{0.48\textwidth}
        \centering
        \includegraphics[width=\linewidth]{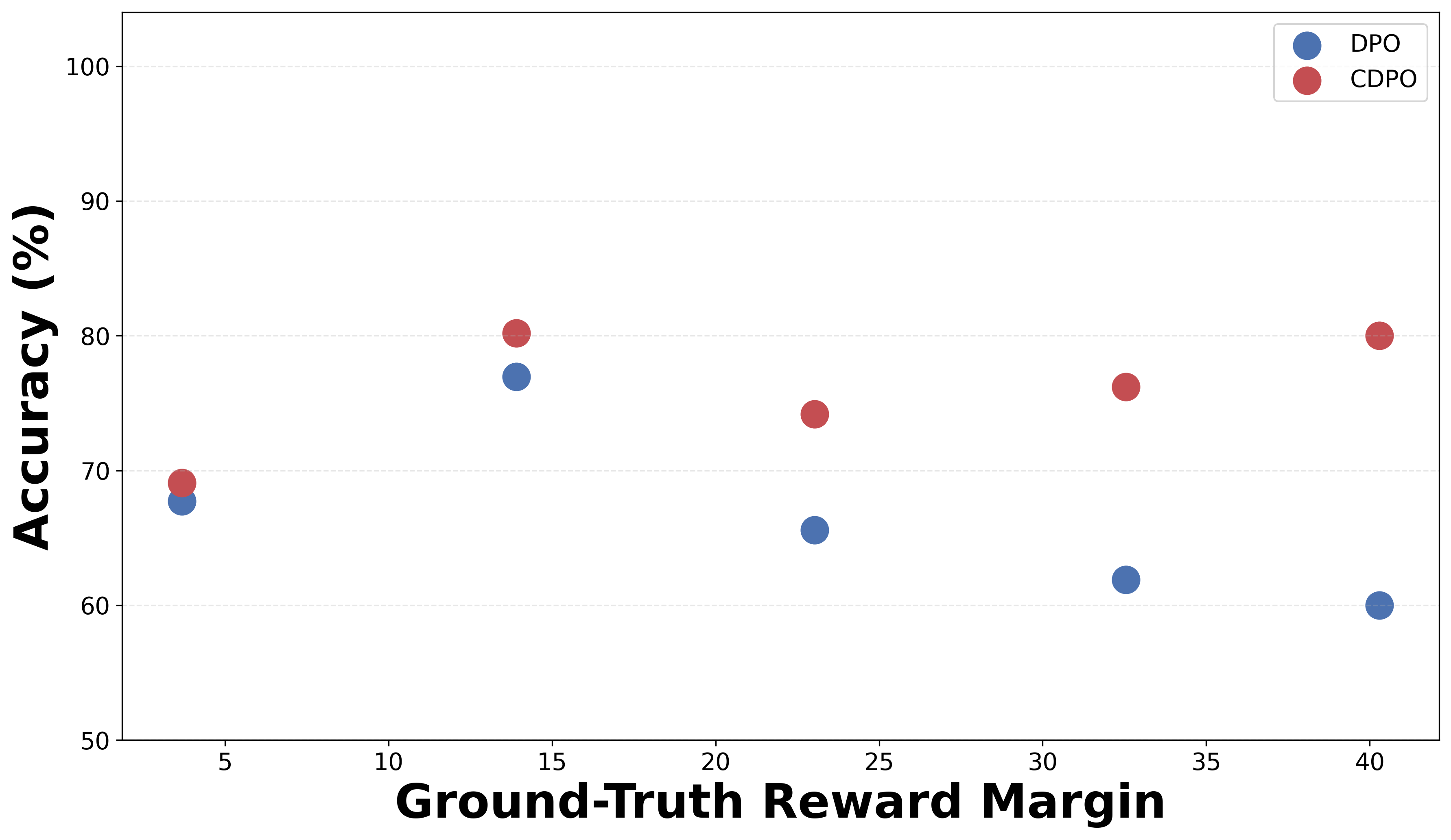}
        \label{fig:margin_agreement}
    \end{minipage}
    \caption{\textbf{CDPO achieves higher utility by optimizing for preference strength.} 
    \textbf{Left:} CDPO achieves higher utility than DPO, where utility measures mean alignment of generations with human preferences.
    Utility is normalized such that the base model gets $0$ utility.
    \textbf{Right:}
    CDPO's advantage over DPO increases with preference strength.
    }\label{fig:utility_accuracy}
\end{figure*}

\textbf{CDPO achieves higher utility than DPO.} Figure \ref{fig:utility_accuracy} shows the central result: CDPO achieves considerably higher utility than DPO, and thus we empirically can confirm that CDPO is producing more preferred/more aligned models than DPO by taking into account richer preference data. Notice that CDPO does better at $2,000$ steps than DPO does at over $14,000$ steps. This indicates that even if collecting cardinal data is more expensive, and we can only collect $\frac{1}{10}$ as much data, CDPO would still be better. 

\textbf{CDPO prioritizes getting strong preferences right.} To gain some intuition about why CDPO is producing more preferable models than DPO, recall that during the optimization process, tradeoffs between prompt-response pairs are inevitable. CDPO makes these tradeoffs by using cardinal human feedback to prioritize high-impact improvements, while DPO gives equal weight to all comparisons regardless of importance. Figure \ref{fig:margin_agreement} demonstrates this quantitatively: analyzing a validation set of 1000 tuples $(x, y_1, y_2)$ stratified by reward margin ($r^{\text{GT}}(x, y_2) - r^{\text{GT}}(x, y_1)$), we find that while the two algorithms perform comparably on low-margin cases, CDPO excels on medium and especially high-margin cases. This shows CDPO successfully prioritizes cases humans identify as important, while DPO wastes optimization effort on relatively unimportant improvements.

\subsection*{Experiment 3: Real Data, Real Training}

In this experiment, we use both real-world cardinal data and the standard chatbot preference fine-tuning setting. Now we cannot observe $\succsim$ directly, so we rely on various preference proxy measures. 

The experiment proceeds as follows: we take the exact same training procedure as in experiment 2 but instead of using Ultrafeedback-200k with simulated preference data, we use the real-world \textsc{CardinalPrefs} preference dataset we gathered in Section \ref{subsec:cardinal_label_quality}. 

\textbf{On real-world preference measures, CDPO exceeds DPO by prioritizing important cases.} On a held-out set, CDPO outputs the preferred response at roughly the same rate as DPO. This is expected as CDPO does not magically make it possible to get more prompts right. However, CDPO does apply more optimization pressure to "important prompts." When we weight observations by WTP, we see CDPO does about $4$ percentage points better. Similarly, when we weight observations by importance as determined by Claude 3.7 Sonnet, we also see CDPO scores higher by $3$ percentage points, indicating that CDPO is properly prioritizing the important cases. Similarly, when we filter to observations that where a response is substantively better as determined by Claude, we also see CDPO scores higher by about $2.5$ percentage points but on stylistic issues, DPO exceeds by around $9$ percentage points. This indicates that CDPO successfully identifies and applies more optimization pressure to important issues, while DPO wastes optimization pressure on relatively less important, stylistic improvements because ordinal measures cannot properly prioritize based on importance. 

Finally, on Arena-Hard, a state-of-the-art preference evaluation measuring win-rates against GPT-4, CDPO wins almost $55\%$ more battles than DPO. 

\begin{table}[h!]
    \centering
    \caption{CDPO exceeds DPO by prioritizing important cases.}
    \begin{tabular}{lcc}
        \toprule
        \textbf{Metric} & \textbf{DPO} & \textbf{CDPO} \\
        \midrule
        Outputs Preferred Response & 51.1\% & \textbf{51.6\% } \\
        Outputs Preferred Response (WTP Weighted) & 52.8\% & \textbf{56.7\%} \\
        Outputs Preferred Response (Importance Weighted) & 46.0\% & \textbf{49.0}\% \\
        Outputs Preferred Response (Substantive) & 44.8\% & \textbf{47.2}\% \\
        Outputs Preferred Response (Style) & \textbf{66.7}\% & 55.6\% \\
        Arena-Hard Winrate & 3.80\% & \textbf{5.9\%} \\
        \bottomrule
    \end{tabular}
    \label{tab:model_comparison}
\end{table}

\section{Conclusion}

In this paper, we demonstrated fundamental limitations of ordinal human feedback and explored the benefit of collecting richer, cardinal feedback. We found that using cardinal feedback resulted in models that are rated as more preferable and score higher on preference-based benchmarks. In future work, it would be valuable to study and improve cardinal feedback elicitation protocols and interface design in an LLM context such as WTP with a fixed budget, implied cardinal ratings from lottery choices, among others. 


\newpage
\clearpage

\section*{Impact Statement}
This research aims to improve language model alignment by developing more theoretically grounded methods for learning and modeling user preferences. While learning people's preferences more accurately could provide significant benefits - including enhanced safety, reliability, and utility - it is important to note that this improvement alone does not address all AI safety concerns.

\bibliography{research} 

\begin{thebibliography}{36}
\providecommand{\natexlab}[1]{#1}
\providecommand{\url}[1]{\texttt{#1}}
\expandafter\ifx\csname urlstyle\endcsname\relax
  \providecommand{\doi}[1]{doi: #1}\else
  \providecommand{\doi}{doi: \begingroup \urlstyle{rm}\Url}\fi

\bibitem[Amini et~al.(2024)Amini, Vieira, and Cotterell]{amini2024direct}
Afra Amini, Tim Vieira, and Ryan Cotterell.
\newblock Direct preference optimization with an offset.
\newblock \emph{arXiv preprint arXiv:2402.10571}, 2024.

\bibitem[Anscombe et~al.(1963)Anscombe, Aumann, et~al.]{anscombe1963definition}
Francis~J Anscombe, Robert~J Aumann, et~al.
\newblock A definition of subjective probability.
\newblock \emph{Annals of mathematical statistics}, 34\penalty0 (1):\penalty0 199--205, 1963.

\bibitem[Azar et~al.(2024)Azar, Guo, Piot, Munos, Rowland, Valko, and Calandriello]{azar2024general}
Mohammad~Gheshlaghi Azar, Zhaohan~Daniel Guo, Bilal Piot, Remi Munos, Mark Rowland, Michal Valko, and Daniele Calandriello.
\newblock A general theoretical paradigm to understand learning from human preferences.
\newblock In \emph{International Conference on Artificial Intelligence and Statistics}, pages 4447--4455. PMLR, 2024.

\bibitem[Bai et~al.(2022{\natexlab{a}})Bai, Jones, Ndousse, Askell, Chen, DasSarma, Drain, Fort, Ganguli, Henighan, et~al.]{bai2022training}
Yuntao Bai, Andy Jones, Kamal Ndousse, Amanda Askell, Anna Chen, Nova DasSarma, Dawn Drain, Stanislav Fort, Deep Ganguli, Tom Henighan, et~al.
\newblock Training a helpful and harmless assistant with reinforcement learning from human feedback.
\newblock \emph{arXiv preprint arXiv:2204.05862}, 2022{\natexlab{a}}.

\bibitem[Bai et~al.(2022{\natexlab{b}})Bai, Kadavath, Kundu, Askell, Kernion, Jones, Chen, Goldie, Mirhoseini, McKinnon, Chen, Olsson, Olah, Hernandez, Drain, Ganguli, Li, Tran-Johnson, Perez, Kerr, Mueller, Ladish, Landau, Ndousse, Lukosuite, Lovitt, Sellitto, Elhage, Schiefer, Mercado, DasSarma, Lasenby, Larson, Ringer, Johnston, Kravec, Showk, Fort, Lanham, Telleen-Lawton, Conerly, Henighan, Hume, Bowman, Hatfield-Dodds, Mann, Amodei, Joseph, McCandlish, Brown, and Kaplan]{bai2022constitutionalaiharmlessnessai}
Yuntao Bai, Saurav Kadavath, Sandipan Kundu, Amanda Askell, Jackson Kernion, Andy Jones, Anna Chen, Anna Goldie, Azalia Mirhoseini, Cameron McKinnon, Carol Chen, Catherine Olsson, Christopher Olah, Danny Hernandez, Dawn Drain, Deep Ganguli, Dustin Li, Eli Tran-Johnson, Ethan Perez, Jamie Kerr, Jared Mueller, Jeffrey Ladish, Joshua Landau, Kamal Ndousse, Kamile Lukosuite, Liane Lovitt, Michael Sellitto, Nelson Elhage, Nicholas Schiefer, Noemi Mercado, Nova DasSarma, Robert Lasenby, Robin Larson, Sam Ringer, Scott Johnston, Shauna Kravec, Sheer~El Showk, Stanislav Fort, Tamera Lanham, Timothy Telleen-Lawton, Tom Conerly, Tom Henighan, Tristan Hume, Samuel~R. Bowman, Zac Hatfield-Dodds, Ben Mann, Dario Amodei, Nicholas Joseph, Sam McCandlish, Tom Brown, and Jared Kaplan.
\newblock Constitutional ai: Harmlessness from ai feedback, 2022{\natexlab{b}}.
\newblock URL \url{https://arxiv.org/abs/2212.08073}.

\bibitem[Carson and Hanemann(2005)]{carson2005contingent}
Richard~T Carson and W~Michael Hanemann.
\newblock Contingent valuation.
\newblock \emph{Handbook of environmental economics}, 2:\penalty0 821--936, 2005.

\bibitem[Casper et~al.(2023)Casper, Davies, Shi, Gilbert, Scheurer, Rando, Freedman, Korbak, Lindner, Freire, et~al.]{casper2023open}
Stephen Casper, Xander Davies, Claudia Shi, Thomas~Krendl Gilbert, J{\'e}r{\'e}my Scheurer, Javier Rando, Rachel Freedman, Tomasz Korbak, David Lindner, Pedro Freire, et~al.
\newblock Open problems and fundamental limitations of reinforcement learning from human feedback.
\newblock \emph{arXiv preprint arXiv:2307.15217}, 2023.

\bibitem[Cerd{\'a} et~al.(2024)Cerd{\'a}, L{\'o}pez-Otero, Quiroga, and Soli{\~n}o]{Cerda2024RenewablesWTP}
Emilio Cerd{\'a}, Xiral L{\'o}pez-Otero, Sonia Quiroga, and Mario Soli{\~n}o.
\newblock Willingness to pay for renewables: Insights from a meta-analysis of choice experiments.
\newblock \emph{Energy Economics}, 130:\penalty0 107301, 2024.
\newblock \doi{10.1016/j.eneco.2024.107301}.

\bibitem[Chiang et~al.(2024{\natexlab{a}})Chiang, Zheng, Dunlap, Gonzalez, Stoica, Mooney, Dane, Howard, and Keating]{lmsys-chatbot-arena}
Wei-Lin Chiang, Lianmin Zheng, Lisa Dunlap, Joseph~E. Gonzalez, Ion Stoica, Paul Mooney, Sohier Dane, Addison Howard, and Nate Keating.
\newblock Lmsys - chatbot arena human preference predictions.
\newblock \url{https://kaggle.com/competitions/lmsys-chatbot-arena}, 2024{\natexlab{a}}.
\newblock Kaggle.

\bibitem[Chiang et~al.(2024{\natexlab{b}})Chiang, Zheng, Sheng, Angelopoulos, Li, Li, Zhang, Zhu, Jordan, Gonzalez, et~al.]{chiang2024chatbot}
Wei-Lin Chiang, Lianmin Zheng, Ying Sheng, Anastasios~Nikolas Angelopoulos, Tianle Li, Dacheng Li, Hao Zhang, Banghua Zhu, Michael Jordan, Joseph~E Gonzalez, et~al.
\newblock Chatbot arena: An open platform for evaluating llms by human preference.
\newblock \emph{arXiv preprint arXiv:2403.04132}, 2024{\natexlab{b}}.

\bibitem[Christiano et~al.(2017)Christiano, Leike, Brown, Martic, Legg, and Amodei]{christiano2017deep}
Paul~F Christiano, Jan Leike, Tom Brown, Miljan Martic, Shane Legg, and Dario Amodei.
\newblock Deep reinforcement learning from human preferences.
\newblock \emph{Advances in neural information processing systems}, 30, 2017.

\bibitem[Cui et~al.(2024)Cui, Yuan, Ding, Yao, He, Zhu, Ni, Xie, Xie, Lin, Liu, and Sun]{cui2024ultrafeedbackboostinglanguagemodels}
Ganqu Cui, Lifan Yuan, Ning Ding, Guanming Yao, Bingxiang He, Wei Zhu, Yuan Ni, Guotong Xie, Ruobing Xie, Yankai Lin, Zhiyuan Liu, and Maosong Sun.
\newblock Ultrafeedback: Boosting language models with scaled ai feedback, 2024.
\newblock URL \url{https://arxiv.org/abs/2310.01377}.

\bibitem[Debreu(1960)]{debreu1960topological}
Gerard Debreu.
\newblock Topological methods in cardinal utility theory.
\newblock 1960.

\bibitem[Ding et~al.(2023)Ding, Chen, Xu, Qin, Zheng, Hu, Liu, Sun, and Zhou]{ding2023enhancingchatlanguagemodels}
Ning Ding, Yulin Chen, Bokai Xu, Yujia Qin, Zhi Zheng, Shengding Hu, Zhiyuan Liu, Maosong Sun, and Bowen Zhou.
\newblock Enhancing chat language models by scaling high-quality instructional conversations, 2023.
\newblock URL \url{https://arxiv.org/abs/2305.14233}.

\bibitem[Ethayarajh et~al.(2024)Ethayarajh, Xu, Muennighoff, Jurafsky, and Kiela]{ethayarajh2024kto}
Kawin Ethayarajh, Winnie Xu, Niklas Muennighoff, Dan Jurafsky, and Douwe Kiela.
\newblock Kto: Model alignment as prospect theoretic optimization.
\newblock \emph{arXiv preprint arXiv:2402.01306}, 2024.

\bibitem[Fishburn(1970)]{fishburn1970utility}
Peter~C Fishburn.
\newblock \emph{Utility theory for decision making}.
\newblock 1970.

\bibitem[Gao et~al.(2024)Gao, Chang, Zhan, Oertell, Swamy, Brantley, Joachims, Bagnell, Lee, and Sun]{gao2024rebel}
Zhaolin Gao, Jonathan Chang, Wenhao Zhan, Owen Oertell, Gokul Swamy, Kiant{\'e} Brantley, Thorsten Joachims, Drew Bagnell, Jason~D Lee, and Wen Sun.
\newblock Rebel: Reinforcement learning via regressing relative rewards.
\newblock \emph{Advances in Neural Information Processing Systems}, 37:\penalty0 52354--52400, 2024.

\bibitem[HuggingFace(2023)]{huggingface_mistral7b_sft_beta}
HuggingFace.
\newblock Mistral-7b-sft-beta model card.
\newblock \url{https://huggingface.co/HuggingFaceH4/mistral-7b-sft-beta}, 2023.
\newblock Accessed: 2024-10-01.

\bibitem[Knox and Stone(2008)]{knox2008tamer}
W~Bradley Knox and Peter Stone.
\newblock Tamer: Training an agent manually via evaluative reinforcement.
\newblock In \emph{2008 7th IEEE international conference on development and learning}, pages 292--297. IEEE, 2008.

\bibitem[Kreps(1988)]{kreps1988notes}
David Kreps.
\newblock \emph{Notes on the Theory of Choice}.
\newblock Routledge, 1988.

\bibitem[Li et~al.(2024)Li, Angelopoulos, and Chiang]{li2024style}
Tianle Li, Anastasios~N. Angelopoulos, and Wei{-}Lin Chiang.
\newblock Does style matter? disentangling style and substance in chatbot arena, August 2024.
\newblock URL \url{https://lmsys.org/blog/2024-08-28-style-control/}.
\newblock LMSYS Org Blog post.

\bibitem[Liu et~al.(2024)Liu, Zeng, Liu, Yan, He, Wang, Yan, Liu, and Zhou]{liu2024skywork}
Chris~Yuhao Liu, Liang Zeng, Jiacai Liu, Rui Yan, Jujie He, Chaojie Wang, Shuicheng Yan, Yang Liu, and Yahui Zhou.
\newblock Skywork-reward: Bag of tricks for reward modeling in llms.
\newblock \emph{arXiv preprint arXiv:2410.18451}, 2024.

\bibitem[MacGlashan et~al.(2023)MacGlashan, Ho, Loftin, Peng, Wang, Roberts, Taylor, and Littman]{macglashan2023interactivelearningpolicydependenthuman}
James MacGlashan, Mark~K Ho, Robert Loftin, Bei Peng, Guan Wang, David Roberts, Matthew~E. Taylor, and Michael~L. Littman.
\newblock Interactive learning from policy-dependent human feedback, 2023.
\newblock URL \url{https://arxiv.org/abs/1701.06049}.

\bibitem[Murphy et~al.(2005)Murphy, Allen, Stevens, and Weatherhead]{murphy2005meta}
James~J Murphy, P~Geoffrey Allen, Thomas~H Stevens, and Darryl Weatherhead.
\newblock A meta-analysis of hypothetical bias in stated preference valuation.
\newblock \emph{Environmental and Resource Economics}, 30:\penalty0 313--325, 2005.

\bibitem[Park et~al.(2024)Park, Rafailov, Ermon, and Finn]{park2024disentanglinglengthqualitydirect}
Ryan Park, Rafael Rafailov, Stefano Ermon, and Chelsea Finn.
\newblock Disentangling length from quality in direct preference optimization, 2024.
\newblock URL \url{https://arxiv.org/abs/2403.19159}.

\bibitem[Pasch(2025)]{pasch2025llmcontentmoderationuser}
Stefan Pasch.
\newblock Llm content moderation and user satisfaction: Evidence from response refusals in chatbot arena, 2025.
\newblock URL \url{https://arxiv.org/abs/2501.03266}.

\bibitem[Rafailov et~al.(2024)Rafailov, Sharma, Mitchell, Manning, Ermon, and Finn]{rafailov2024direct}
Rafael Rafailov, Archit Sharma, Eric Mitchell, Christopher~D Manning, Stefano Ermon, and Chelsea Finn.
\newblock Direct preference optimization: Your language model is secretly a reward model.
\newblock \emph{Advances in Neural Information Processing Systems}, 36, 2024.

\bibitem[Savage(1972)]{savage1972foundations}
Leonard~J Savage.
\newblock \emph{The foundations of statistics}.
\newblock Courier Corporation, 1972.

\bibitem[Schmidt and Bijmolt(2020)]{Schmidt2020HypotheticalBiasWTP}
Jonas Schmidt and A.~Bijmolt, Tammo~H.\.
\newblock Accurately measuring willingness to pay for consumer goods: a meta-analysis of the hypothetical bias.
\newblock \emph{Journal of the Academy of Marketing Science}, 48:\penalty0 499--518, 2020.
\newblock \doi{10.1007/s11747-019-00666-6}.

\bibitem[Shirali et~al.(2023)Shirali, Abebe, and Hardt]{shirali2023theorydynamicbenchmarks}
Ali Shirali, Rediet Abebe, and Moritz Hardt.
\newblock A theory of dynamic benchmarks, 2023.
\newblock URL \url{https://arxiv.org/abs/2210.03165}.

\bibitem[Singh et~al.(2025)Singh, Nan, Wang, D'Souza, Kapoor, Üstün, Koyejo, Deng, Longpre, Smith, Ermis, Fadaee, and Hooker]{singh2025leaderboardillusion}
Shivalika Singh, Yiyang Nan, Alex Wang, Daniel D'Souza, Sayash Kapoor, Ahmet Üstün, Sanmi Koyejo, Yuntian Deng, Shayne Longpre, Noah Smith, Beyza Ermis, Marzieh Fadaee, and Sara Hooker.
\newblock The leaderboard illusion, 2025.
\newblock URL \url{https://arxiv.org/abs/2504.20879}.

\bibitem[Singhal et~al.(2023)Singhal, Goyal, Xu, and Durrett]{singhal2023long}
Prasann Singhal, Tanya Goyal, Jiacheng Xu, and Greg Durrett.
\newblock A long way to go: Investigating length correlations in rlhf.
\newblock \emph{arXiv preprint arXiv:2310.03716}, 2023.

\bibitem[Tunstall et~al.(2023)Tunstall, Beeching, Lambert, Rajani, Rasul, Belkada, Huang, von Werra, Fourrier, Habib, Sarrazin, Sanseviero, Rush, and Wolf]{tunstall2023zephyrdirectdistillationlm}
Lewis Tunstall, Edward Beeching, Nathan Lambert, Nazneen Rajani, Kashif Rasul, Younes Belkada, Shengyi Huang, Leandro von Werra, Clémentine Fourrier, Nathan Habib, Nathan Sarrazin, Omar Sanseviero, Alexander~M. Rush, and Thomas Wolf.
\newblock Zephyr: Direct distillation of lm alignment, 2023.
\newblock URL \url{https://arxiv.org/abs/2310.16944}.

\bibitem[Wilde et~al.(2021)Wilde, Bıyık, Sadigh, and Smith]{wilde2021learningrewardfunctionsscale}
Nils Wilde, Erdem Bıyık, Dorsa Sadigh, and Stephen~L. Smith.
\newblock Learning reward functions from scale feedback, 2021.
\newblock URL \url{https://arxiv.org/abs/2110.00284}.

\bibitem[Yannakakis and Hallam(2011)]{yannakakis2011ranking}
Georgios~N Yannakakis and John Hallam.
\newblock Ranking vs. preference: a comparative study of self-reporting.
\newblock In \emph{Affective Computing and Intelligent Interaction: 4th International Conference, ACII 2011, Memphis, TN, USA, October 9--12, 2011, Proceedings, Part I 4}, pages 437--446. Springer, 2011.

\bibitem[Zhao et~al.(2024)Zhao, Rush, and Goyal]{zhao2024challengestrustworthyhumanevaluation}
Wenting Zhao, Alexander~M. Rush, and Tanya Goyal.
\newblock Challenges in trustworthy human evaluation of chatbots, 2024.
\newblock URL \url{https://arxiv.org/abs/2412.04363}.

\end{thebibliography}
\bibliographystyle{plainnat}

\appendix
\onecolumn

\section{Proofs}
\renewenvironment{proof}[1][\proofname]{{\bfseries #1.}}{\qed}

\begin{proof}[Proof of Theorem \ref{ordinsufficient}]
    Suppose there exists an $f$ such that for any $\mathcal{F}$ and any $\succsim \in \mathcal{P}(\Pi)$, $f(D_\succsim^\text{ordinal}, \mathcal{F}) \succsim \pi$ for all $\pi \in \mathcal{F}$. 

    Let $\mathcal{F} = \{\pi^{(1)}, \pi^{(2)}\}$ where $\pi^{(1)}$ and $\pi^{(2)}$ differ only on two prompts, $x_1$ and $x_2$ where $\pi^{(1)}$ is better than $\pi^{(2)}$ on prompt $x_1$ and the reverse holds for prompt $x_2$. 
    
    Let $\succsim^{(1)}, \succsim^{(2)} \in P(\Pi)$ such that $\pi^{(1)} \succ^{(1)} \pi^{(2)}$,  $\pi^{(2)} \succ^{(2)} \pi^{(1)}$ but also such that $\succsim_{x}^{(1)} = \succsim_{x}^{(2)}$ for all $x \in \mathcal{X}$ i.e. both preferences agree on which responses are better for each prompt, but they weigh the prompts differently. 
    
    Then, $D_{\succsim^{(1)}}^\text{ordinal} = D_{\succsim^{(2)}}^\text{ordinal}$ (i.e. $\succsim$ is not identified from $D^\text{ordinal}$) and therefore
    \[\pi^* = f(D_{\succsim^{(1)}}^\text{ordinal}, \mathcal{F}) = f(D_{\succsim^{(2)}}^\text{ordinal}, \mathcal{F})
    \]    

    Without loss of generality, suppose $\pi^* = \pi^{(1)}$. Then we have $f(D_{\succsim^{(2)}}^\text{ordinal}, \mathcal{F}) = \pi^{(1)} \not \succsim^{(2)} \pi^{(2)}$. This violates our assumption that $f$ is such that $f(D_\succsim^\text{ordinal}, \mathcal{F}) \succsim \pi$ for all $\pi \in \mathcal{F}$ and all $\succsim \in P(\Pi)$. 
\end{proof}

\begin{proof}[Proof of Theorem \ref{sufficient}]
    By Theorem \ref{kreps}, $\succsim$ admits a utility function of the form $\sum_{x \in \mathcal{X}} \sum_{y \in \mathcal{Y}} \pi(y|x) r^{\text{true}}(x, y)$. Under quasi-linearity of money, the WTP to go from $y$ to $y'$ as a response to $x$ is given by $r(x, y') - r(x, y)$. That implies the reward modeling stage solves
    \[
    \hat{r}(x_i, y_i; D_\succsim^\text{WTP}) \in \argmin_{r} \bigg(r(x_i, y_i') - r(x_i, y_i) - r^\text{true}(x_i, y_i') - r^\text{true}(x_i, y_i)\bigg)^2
    \]
    Since $D_\succsim^\text{WTP}$ is sufficiently large, $\hat{r}(x_i, y_i; D_\succsim^\text{WTP})$ will perfectly match $r^\text{true}(x_i, y_i)$ up to an affine transformation. Since $r$ is unique up to affine transformations by Theorem \ref{affine}, $\sum_{x \in \mathcal{X}} \sum_{y \in \mathcal{Y}} \pi(y|x)
    \hat{r}(x_i, y_i; D_\succsim^\text{WTP})$ is also a utility function that represents $\succsim$. Since $\mathcal{F}$ is compact by assumption, the argmax exists and is always the most preferred feasible policy i.e. $f(D_\succsim^\text{WTP}, \mathcal{F}) \succsim \pi$ for all $\pi \in \mathcal{F}$. 
\end{proof}

\subsection{Additional Appendix Results}
\begin{theorem}\label{kreps}
    $\succsim$ satisfies completeness, transitivity, continuity and independence if and only if there exists a $r: \mathcal{X} \times \mathcal{Y} \to \mathcal{R}$ such that $\sum\limits_{x \in \mathcal{X}} \sum\limits_{y \in \mathcal{Y}} \pi(y|x) r(x, y)$ represents $\succsim$. 
\end{theorem}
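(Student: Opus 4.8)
The plan is to read this as the von Neumann--Morgenstern expected utility theorem specialized to the mixture space $\Pi$, followed by a short linear-algebra step that converts mixture-affinity into the stated summation form.

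The direction ($\Leftarrow$) is routine. If $U(\pi) = \sum_{x\in\mathcal{X}}\sum_{y\in\mathcal{Y}}\pi(y|x)r(x,y)$ represents $\succsim$, then completeness and transitivity are inherited from the total order on $\mathbb{R}$; continuity holds because $U$ is linear, hence continuous, in the finitely many coordinates $\pi(y|x)$, so all upper and lower contour sets are closed; and independence follows because $U$ is affine under mixing, i.e. $U(\lambda\pi + (1-\lambda)\pi'') = \lambda U(\pi) + (1-\lambda)U(\pi'')$, so mixing with a common $\pi''$ preserves the sign of $U(\pi) - U(\pi')$.

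For ($\Rightarrow$), the first step is to note that $\Pi = \prod_{x\in\mathcal{X}}\Delta(\mathcal{Y})$ is a compact convex subset of $\mathbb{R}^{|\mathcal{X}|\cdot|\mathcal{Y}|}$ on which the mixture operation $\lambda\pi + (1-\lambda)\pi'$, taken prompt-by-prompt, is exactly ordinary convex combination of coordinate vectors; thus $\Pi$ is a mixture space and the four axioms are precisely the hypotheses of the Herstein--Milnor mixture-space version of the expected utility theorem. Invoking it produces a mixture-affine $U:\Pi\to\mathbb{R}$ representing $\succsim$. The second step is to show that any mixture-affine $U$ on $\Pi$ has the form $\pi\mapsto\sum_{x,y}a_{x,y}\pi(y|x)+b$: since $\Pi$ has nonempty relative interior in its affine hull, $U$ extends to an affine functional $\langle a,\cdot\rangle + b$ on that hull (equivalently, an affine function is pinned down by its values on an affine basis of $\Pi$, e.g. the deterministic policies, and any such assignment extends consistently). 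Finally, the additive constant is absorbed using $\sum_y\pi(y|x)=1$ for each $x$: setting $r(x,y):=a_{x,y}+b/|\mathcal{X}|$ gives $\sum_{x,y}\pi(y|x)r(x,y) = \sum_{x,y}a_{x,y}\pi(y|x) + (b/|\mathcal{X}|)\sum_{x}1 = U(\pi)$, which is the desired representation.

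The main obstacle is not the linear algebra, which is elementary, but the careful matching of hypotheses in the first step: one must check that the paper's topological notion of continuity of $\succsim$ on the compact set $\Pi$ supplies the Archimedean / mixture-continuity condition that Herstein--Milnor requires, and that the independence axiom as stated in the appendix is in the strict form the theorem uses. On a compact convex Euclidean domain these are standard equivalences, so the step reduces to invoking the appropriate version of the theorem, but it is the place where care is needed; everything after it is bookkeeping.
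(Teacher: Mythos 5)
Your proof is correct; it reaches the representation by unpacking one level of citation that the paper leaves packed. The paper's proof observes that $\Pi$ is formally the set of Anscombe--Aumann acts with prompts as states and responses as outcomes, and then invokes Kreps's Proposition 7.4 on state-dependent subjective expected utility, which delivers the additively separable form $\sum_{x,y}\pi(y\mid x)\,r(x,y)$ in one stroke. You instead invoke only the mixture-space (Herstein--Milnor / vNM) theorem to obtain a mixture-affine $U$ on $\Pi=\prod_{x}\Delta(\mathcal{Y})$, and then perform by hand the decomposition that Kreps's proposition encapsulates: extending $U$ to an affine functional $\langle a,\cdot\rangle+b$ on the affine hull and absorbing the constant via $\sum_{y}\pi(y\mid x)=1$ for each $x$, so that $r(x,y)=a_{x,y}+b/|\mathcal{X}|$ works. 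That step, and the $(\Leftarrow)$ verification, are correct; your version has the merit of showing exactly where each axiom enters, while the paper's is shorter and foregrounds the conceptual analogy with state-dependent expected utility (which is what licenses the prompt-dependent $r(x,\cdot)$). The point you flag as needing care --- matching the paper's continuity axiom to the hypothesis of the mixture-space theorem --- is the right thing to check: the paper's axiom is the Archimedean/solvability form (existence of indifferent mixtures), which is precisely the continuity hypothesis under which the mixture-space theorem is standardly proved, so no gap remains. The only small slip is in your $(\Leftarrow)$ direction, where you verify continuity via closed contour sets; the paper's axiom instead asks for an indifferent mixture, which follows from the intermediate value theorem applied to $\lambda\mapsto U(\lambda\pi+(1-\lambda)\pi'')$.
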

\begin{proof}[Proof of Theorem \ref{kreps}]
    
    This proof follows by exploiting a mathematical equivalence between language models and subjective expected utility \citep{anscombe1963definition}. 

     Notice that our setting is mathematically identical to subjective expected utility with state dependent preferences. Consider the following analogy:
\begin{enumerate}
    \item In our context:
    \begin{itemize}
        \item We have a finite number of prompts
        \item We have a finite number of responses 
        \item We have a language model $\pi$ which maps from our prompts to lotteries over our responses
    \end{itemize}
    
    \item In subjective expected utility 
    \begin{itemize}
        \item We have a finite number of states 
        \item We have a finite number of outcomes 
        \item We have a `act' $f$ which maps from the states to lotteries over outcomes. 
    \end{itemize}
\end{enumerate}
Usually subjective expected utility analysis has extra assumptions added on that will break the analogy, like the assumption of state-independent preferences. In our setting, that would be equivalent to prompt-independent preferences, which is not plausible.

But if we focus on the general case of subjective expected utility with state-dependent preferences, our language model setting exactly lines up with the mathematical setting considered in \citet{kreps1988notes} proposition 7.4. This proposition shows in the subjective expected utility setting that completeness, transitivity, continuity and independence are necessary and sufficient for an additively separable, expected utility representation. Therefore, we have our result. 
\end{proof}

\begin{theorem}\label{affine}
    $\sum_{x \in \mathcal X}\sum_{y \in \mathcal Y}\pi(y\mid x)\,r^{(1)}(x,y)$
    and
    $\sum_{x \in \mathcal X}\sum_{y \in \mathcal Y}\pi(y\mid x)\,r^{(2)}(x,y)$
    represent the same $\succsim$
    iff there exist $a>0$ and $b:\mathcal X\to\mathbb R$ such that  
    $r^{(1)}(x,y)=a\,r^{(2)}(x,y)+b(x)$ for all $x,y$.
\end{theorem}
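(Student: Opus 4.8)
The plan is to prove both directions of the equivalence, with the ``if'' direction being a routine check and the ``only if'' direction being the substantive part. For the easy direction, suppose $r^{(1)}(x,y) = a\,r^{(2)}(x,y) + b(x)$ with $a > 0$. Then for any $\pi$,
\[
\sum_{x}\sum_{y}\pi(y\mid x)\,r^{(1)}(x,y) = a\sum_{x}\sum_{y}\pi(y\mid x)\,r^{(2)}(x,y) + \sum_{x} b(x),
\]
since $\sum_{y}\pi(y\mid x) = 1$ for each $x$. This is a strictly increasing affine transformation of the $r^{(2)}$-representation (the additive constant $\sum_x b(x)$ does not depend on $\pi$), so the two functionals induce the same ordering on $\Pi$, hence represent the same $\succsim$.

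For the ``only if'' direction, the key tool is \cref{kreps}: both functionals represent $\succsim$, so it suffices to understand the uniqueness structure of such additively-separable expected-utility representations. I would exploit the prompt-level preferences $\succsim_x$ defined in the preliminaries. Fix a prompt $x$ and consider two models $\pi, \pi'$ that agree off $x$ and differ only in the response-distribution at $x$; comparing them isolates the functional $y \mapsto r(x,y)$ as a von Neumann--Morgenstern utility index for $\succsim_x$ on $\Delta(\mathcal{Y})$. Since $r^{(1)}(x,\cdot)$ and $r^{(2)}(x,\cdot)$ are both vNM indices for the same preference $\succsim_x$, the classical uniqueness theorem for expected utility gives constants $a_x > 0$ and $b_x \in \mathbb{R}$ with $r^{(1)}(x,y) = a_x r^{(2)}(x,y) + b_x$ for all $y$ — provided $\succsim_x$ is non-degenerate (has at least two non-indifferent responses). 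The remaining work is to show the scale factor $a_x$ is the same across all prompts; once that is done, setting $a := a_x$ and $b(x) := b_x$ finishes the proof.

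To pin down a common scale, I would compare tradeoffs across two prompts $x$ and $x'$. Pick responses with $y_1 \succ_x y_0$ and $y_1' \succ_{x'} y_0'$, and consider models that shift mass at $x$ from $y_0$ toward $y_1$ while simultaneously shifting mass at $x'$ from $y_1'$ toward $y_0'$; indifference between two such models gives an equation of the form $\lambda[r(x,y_1) - r(x,y_0)] = \mu[r(x',y_1') - r(x',y_0')]$ for mixing weights $\lambda, \mu$ determined by $\succsim$ alone. Writing this identity for both $r^{(1)}$ and $r^{(2)}$ and substituting $r^{(1)}(x,\cdot) = a_x r^{(2)}(x,\cdot) + b_x$ forces $a_x = a_{x'}$. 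This requires that such an indifference can be realized within $\Delta(\mathcal{Y})$, which holds because lotteries can be mixed continuously and the relevant utility differences are nonzero by non-degeneracy.

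The main obstacle is the degenerate-prompt case: if $\succsim_x$ is constant (the user is indifferent among all responses to $x$), then $r(x,\cdot)$ is forced to be constant for any representation, $a_x$ is unconstrained, and one simply absorbs everything into $b(x)$ — so these prompts must be handled separately and cannot be used as the ``anchor'' for fixing the common scale $a$. The proof only goes through cleanly if at least one prompt is non-degenerate; if every $\succsim_x$ is degenerate then $\succsim$ is trivial, every $r$ represents it, and the claim holds vacuously by taking $a = 1$. I would state this case split explicitly at the start of the ``only if'' argument and then carry out the scale-matching step using a fixed non-degenerate reference prompt.
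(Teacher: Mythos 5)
Your proof is correct, but it takes a genuinely different route from the paper's. The paper argues globally: since $u^{(1)}$ and $u^{(2)}$ induce the same order, $u^{(1)}=f\circ u^{(2)}$ for a strictly increasing $f$; linearity of both functionals in $\pi$ forces $f$ to satisfy Jensen's equation on the (convex) range of $u^{(2)}$, hence $f(t)=at+b$ with $a>0$; and a final step restricts the resulting identity $\sum_{x,y}\pi(y\mid x)[r^{(1)}(x,y)-a\,r^{(2)}(x,y)]=b$ to each simplex $\Delta(\mathcal Y)$ to conclude the bracketed coefficient is constant in $y$, yielding $b(x)$. You instead work locally: vNM uniqueness applied prompt-by-prompt gives $r^{(1)}(x,\cdot)=a_x r^{(2)}(x,\cdot)+b_x$, and a cross-prompt indifference (an ``exchange rate'' $\mu/\lambda$ between improvements at $x$ and $x'$, which is pinned down by $\succsim$ and hence must agree under both representations) forces $a_x=a_{x'}$. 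Both arguments are sound. The paper's approach is shorter and needs only one functional-equation fact, and it silently covers the case of prompts on which the user is indifferent among all responses (the coefficient-vector-constant step still applies); your approach requires the extra scale-matching step and the explicit degenerate-prompt case split you correctly identify, but in exchange it makes transparent exactly which behavioral comparisons (cross-prompt tradeoffs) identify the common scale $a$ --- which is precisely the conceptual point the paper is making about why cardinal information matters. One small note: you cite Theorem \ref{kreps} as the key tool, but the only-if direction does not need it; the hypothesis of Theorem \ref{affine} already supplies two additively separable representations of the same $\succsim$, so you can proceed directly from there.
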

\begin{proof}[Proof of Theorem \ref{affine}]
Define
\[
  u^{(i)}(\pi)=\sum_{x,y}\pi(y\mid x)\,r^{(i)}(x,y),\qquad i=1,2.
\]

\paragraph{$(\Rightarrow)$}
Assume $u^{(1)}$ and $u^{(2)}$ induce the same order.  
Hence $\exists$ a strictly increasing $f:\mathbb R\to\mathbb R$ with  
\(u^{(1)}(\pi)=f\bigl(u^{(2)}(\pi)\bigr)\) for all $\pi$.

For any policies $\pi^1,\pi^2$ and any $\lambda\in[0,1]$ define  
\[
  \pi^\lambda(y\mid x)
  :=\lambda\,\pi^1(y\mid x)+(1-\lambda)\,\pi^2(y\mid x),
  \quad\forall x,y.
\]
$u^{(i)}$ is given as linear in $\pi(y|x)$
\[
  u^{(i)}(\pi^\lambda)=\lambda\,u^{(i)}(\pi^1)+(1-\lambda)\,u^{(i)}(\pi^2),
  \qquad i=1,2. \tag{1}
\]

Let  
\(s:=u^{(2)}(\pi^1)\) and \(t:=u^{(2)}(\pi^2)\).  
Equation (1) and the representation relation give
\[
  f\!\bigl(\lambda s+(1-\lambda)t\bigr)
  \;=\;
  f\!\bigl(u^{(2)}(\pi^\lambda)\bigr)
  \;=\;
  u^{(1)}(\pi^\lambda)
  \;\overset{(1)}{=}\;
  \lambda\,u^{(1)}(\pi^1)+(1-\lambda)\,u^{(1)}(\pi^2)
  \;=\;
  \lambda f(s)+(1-\lambda)f(t).
\]

Because $s$ and $t$ range over all values $u^{(2)}$ attains, and
$\lambda s+(1-\lambda)t$ is again in that range (by $\pi^\lambda$),
The above equation holds on a convex subset of $\mathbb R$.
A strictly increasing real function satisfying (2) on a convex set must be affine,  
so \(f(t)=a\,t+b\) with \(a>0\).

\medskip
Substituting back, for every $\pi$
\[
  \sum_{x,y}\pi(y\mid x)\,\bigl[r^{(1)}(x,y)-a\,r^{(2)}(x,y)\bigr]=b.
\]
Fix any \(x\) and view the left side as an affine functional of
\(\pi(\cdot\mid x)\) on the simplex \(\Delta(\mathcal Y)\);
being constant on that simplex forces the coefficient vector to be constant:
\[
  r^{(1)}(x,y)-a\,r^{(2)}(x,y)=b(x)\quad\forall y,
\]
hence \(r^{(1)}(x,y)=a\,r^{(2)}(x,y)+b(x)\).

\paragraph{$(\Leftarrow)$}
Conversely, if \(r^{(1)}(x,y)=a\,r^{(2)}(x,y)+b(x)\) with \(a>0\), then
\[
  u^{(1)}(\pi)=a\,u^{(2)}(\pi)+\sum_x b(x),
\]
a positive‑affine transform of \(u^{(2)}\); therefore the two
functionals induce exactly the same $\succsim$.
\end{proof}

\section{Additional Theoretical Results}
\subsection{Deriving $\succsim_x$}
In this subsection, we formally show how to derive $\succsim_x$ from $\succsim$. 

Given $\pi \in \Pi$, define 
\[
\pi_{x_j, \tilde{y}}(x) = 
\begin{cases} 
\pi(x) & \text{if } x \neq x_j \\
\tilde{y}  & \text{if } x = x_j
\end{cases}
\]
That is, $\pi_{x_j, \tilde{y}}$ is $\pi$ where we modify just how it responds to $x_j$ to $\tilde{y}$. 

With this machinery, we can define preferences over responses. 
    Let $\tilde{y}^{(1)}, \tilde{y}^{(2)} \in \Delta(\mathcal{Y})$. Define $\succsim_{x_i}$ as follows 
    \begin{equation*}
    \tilde{y}^{(1)} \succsim_{x_i} \tilde{y}^{(2)}
    \overset{\text{def}}{\iff} \pi_{x_i, \tilde{y}^{(1)}} \succsim \pi_{x_i, \tilde{y}^{(2)}} \quad \forall \pi \in \Pi
\end{equation*}

That is, distribution of responses $\tilde{y}^{(1)}$ is preferred to $\tilde{y}^{(2)}$ as a response to prompt $x_i$ if for any model, we can make it more preferred by switching its response from $\tilde{y}^{(2)}$ to $\tilde{y}^{(1)}$. It is straightforward to verify that $\succsim_x$ from such a definition is transitive, continuous and satisfies independence since $\succsim$ does. Furthermore, independence of $\succsim$ implies $\succsim_x$ is complete. This can be seen via the utility representation in theorem \ref{kreps}. 

\subsection{Preference Axioms}
For a review, we cover the basic axioms on preferences that we assume throughout the paper. 
\begin{enumerate}
    \item Complete: for any $x, y$ in the mixture space it must be $x \succsim y$ or $y \succsim x$ or both. 
    \item Transitive: $x \succsim y \succsim z \implies x \succsim z$
    \item Continuous: For any $x \succ y \succ z$, there exists $a,b \in (0,1)$ such that $ax + (1-a)z \sim y \sim bx + (1 - b)z$
    \item Independence: $x \succ y \implies \alpha x + (1 - \alpha) z \succ \alpha y + (1- \alpha) z$
    for all $\alpha \in (0, 1]$.
\end{enumerate}

\newpage 
\clearpage
\section{Experiment Details}\label{app:experiment_details}
\subsection{Willingness-to-Pay Details}
In an ideal WTP setting, elicitations are incentivized by in some scenarios taking money from participants and allocating them the good. That is unfortunately difficult in a language model setting, but thankfully a frequent finding from experimental economics is that incentivized and unincentivized willingness-to-pay elicitations are not majorly dissimilar \citep{murphy2005meta}. Therefore we use stated willingness-to-pay in our experiment. 

To elicit these values, labelers are provided with labeling software that looks like the following 
\begin{figure}[h]
    \centering
    \includegraphics[width=0.95\columnwidth]{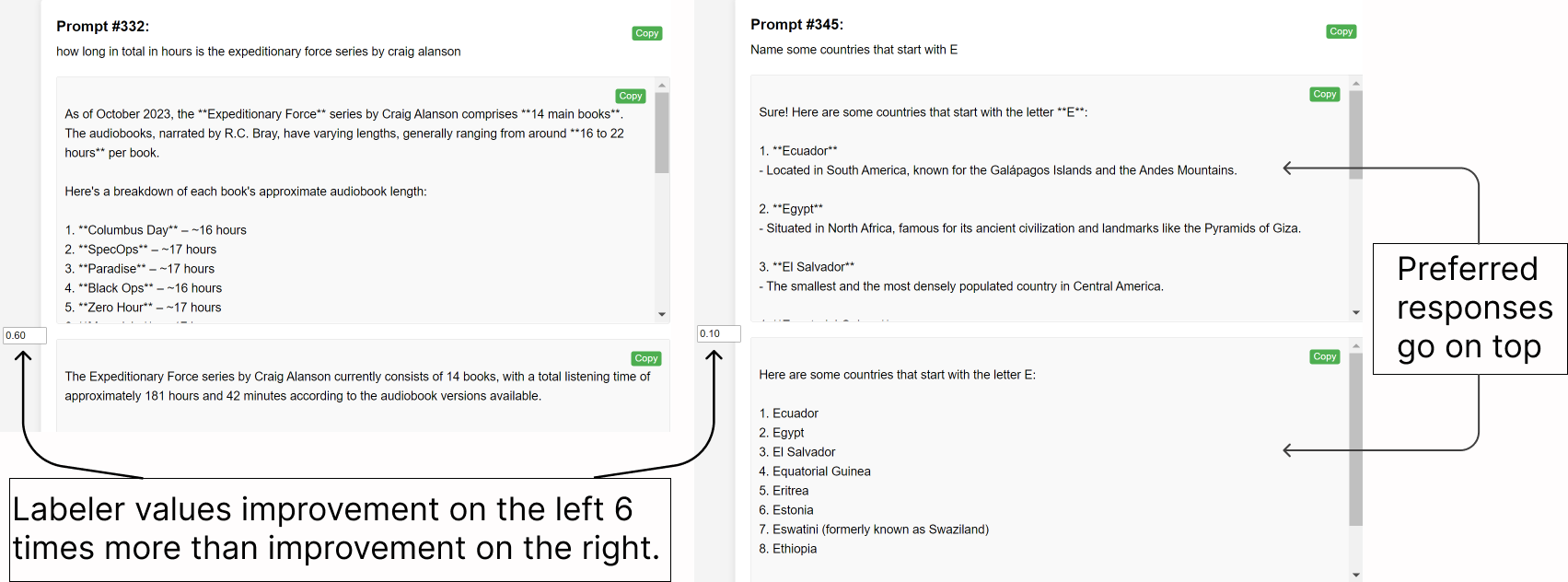}
    \caption{Labeling Software}
    \label{fig:labeling_software}
\end{figure}

Instructions are provided to labelers to rank their preferred responses on top and to use the numerical boxes to indicates their willingness-to-pay to move from the worse response to the better response. They are told that the WTP denotes the maximum you would be happy with spending to get the preferred response. If the amount were any higher, you would prefer to keep the money and get the worse response. Although conceptually the numeraire we focus on is money, we also tried various options such as normalizing the value of a reference improvement to $1$ and then eliciting all other values relative to that reference value. We include both of these methods in the final dataset to maximize sample size. 

Compensation was provided by the hour - at an average of roughly $\$15$ an hour, with bonuses for accuracy as measured by congruence.   

Ultimately, labels are provided by $5$ distinct labelers we hired - a mix of undergraduate and labelers on Upwork with LLM labeling experience. For the data we use in training, each sample corresponds to one labeler and there is no overlap between labelers. Separately, we designed a small hold-out set to check congruence between labelers and found a congruence of approximately $67\%$. 

Finally, for experiment one, the experiment was piloted with various options on how to set up the cross-labeling, and the one presented in the paper was the pilot option that was expanded after seeing promising results. 

\subsection{Labeling Improvements Prompt}
For Table 2, we use Claude $3.7$ Sonnet to label how much of a quality difference there is in responses and if the improvement in response quality is due to substantive or stylistic issues. The following is the prompt we gave Claude.

\begin{quote}
You are an expert AI assistant capable of evaluating the quality of text responses.
You will be given a prompt, a 'Chosen' response, and a 'Rejected' response.
Your task is to assess the *difference* in quality between the Chosen and Rejected responses.

First, rate the *importance* of this quality difference on a scale from 1 to 10, where:
1: The quality difference is negligible or very minor.
5: The quality difference is moderate and noticeable.
10: The quality difference is very significant and crucial.

Second, identify the primary nature of the improvement the 'Chosen' response has over the 'Rejected' response. Choose one:
- Substance: The chosen response is better due to significant improvements in factual accuracy, completeness, reasoning, instruction following, or overall helpfulness of the core content.
- Style: The chosen response is better primarily due to improvements in clarity, conciseness, tone, formatting, grammar, or readability, while the core substance might be similar or only marginally better.
- Other: Neither Substance nor Style predominantly describes the improvement, or the improvement is a mix, or it's hard to categorize.

Third, provide a brief explanation for your score and type.

Provide your response formatted *exactly* as follows, with each item on a new line:
Score: 
Improvement Type: 
Explanation: 
\end{quote}

\subsection{Details on Individual-Sample DPO}
In section 5, we ran a simple experiment where we tracked DPO loss at the individual sample level. Figure \ref{fig:mean_dpo_loss} shows the overall DPO loss follows a standard training path, so our results are not driven by failure to learn anything. 

\begin{figure}[h]
    \centering
    \includegraphics[width=.5\columnwidth]{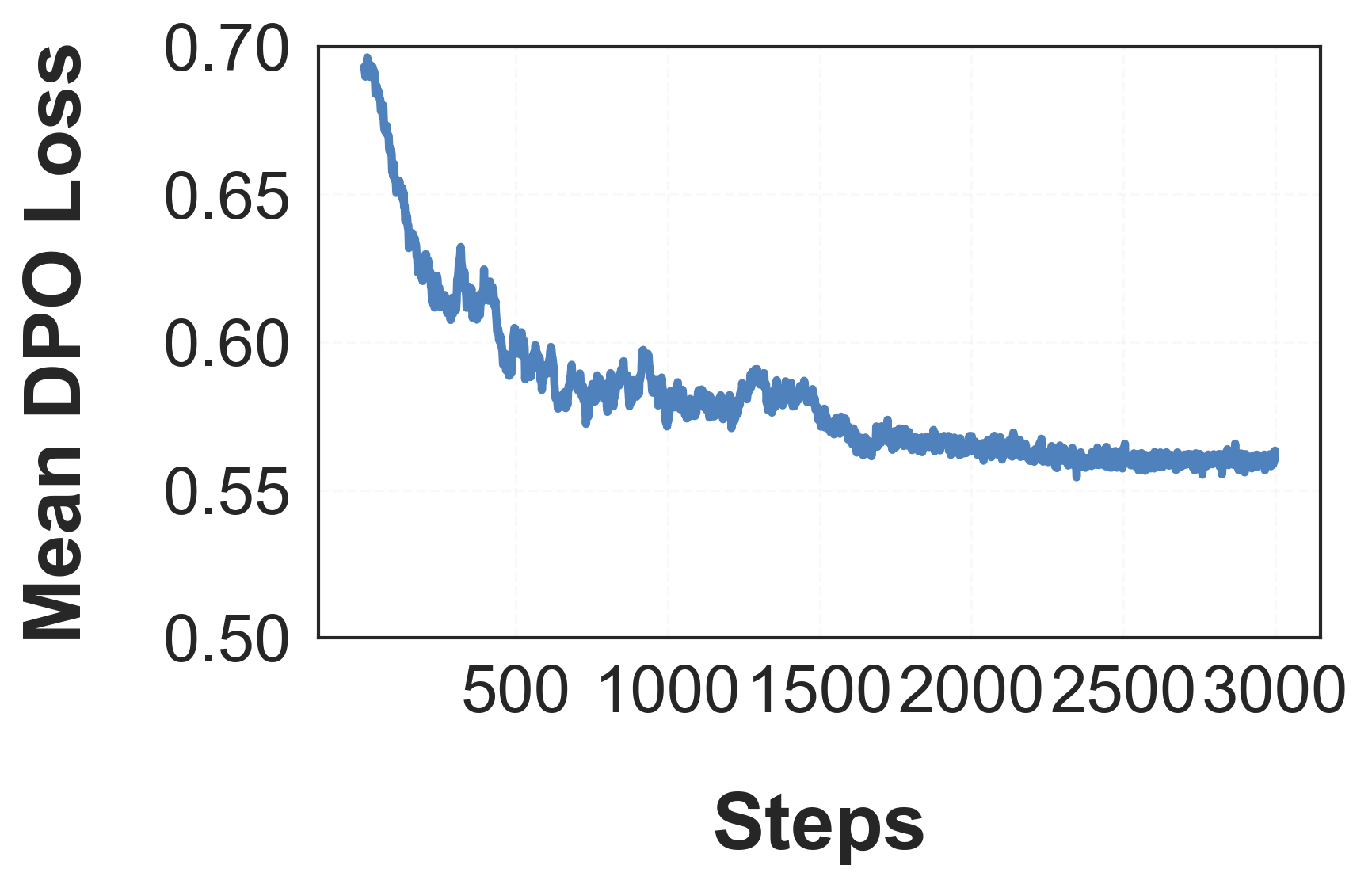}
    \caption{Mean DPO loss across all tracked 100 samples. }
    \label{fig:mean_dpo_loss}
\end{figure}

\subsection{Training Details}\label{appendix:training}

\textbf{DPO and CDPO Fine-tuning and Inference.}
For both DPO and CDPO, our base model is a Mistral-7B base model that has been full-parameter supervised fine-tuned on the UltraChat dataset. We then LoRA-fine-tune this model for one epoch. We use batch size 4, LoRA rank $r_{LoRA}=64$, regularization $\alpha_{LoRA} = 64$, and dropout $p_{LoRA} = 0.05$. We use learning rate $1e-5$, 150 warmup steps, and max conversation length of 1024 tokens. This replicates the Zephyr training recipe, except that we use $\beta = 0.1$ instead of $\beta = 0.01$, which we found improved performance. Finally, we normalize the WTP data so that the standard deviation matches the standard deviation from Bradley-Terry models, so that the regularization is relative to the same value for both values. 

We run this training recipe on a single A$100$ with $80$GB of VRAM. Each training run takes roughly $5$ hours. 

\textbf{Reward Model Training and Inference.}
To represent $U^{GT}$ for data labeling and evaluation, we use a reward model fine-tuned on \textsc{CardinalPrefs}. As the base reward model, we LoRA-fine-tune Skywork/Skywork-Reward-Llama-3.1-8B (a top-rated reward model on RewardBench) for one epoch on our dataset.

To compute the mean reward, we sample 1000 prompts from a held-out validation split of Ultrafeedback-200k that the language models were not trained on. We use this to calculate the average reward achieved by each model's completions.

\subsection{Evaluations}
For our accuracy plot, we use the implicit reward model defined by DPO and CDPO 
\begin{align}
    r(x_i,y) &= \beta \log \frac{\pi(y|x_i)}{\pi_{ref}(y|x)}
\end{align}
We use accuracy to mean the implicit reward model ranks $y_2$ and $y_1$ the same way $\succsim_{x_i}$ does. 

\section{Additional Empirical Results}
\subsection{Ablations}
We reproduce the first three rows of Table 2 while varying the regularization parameter $\beta = (0.01, 0.1, 0.25)$. For each choice of $\beta$, we implement the same training procedure described in the above section. These results show robustness across $\beta$ - CDPO consistently outperforms DPO, especially on importance-weighted tasks.  

\begin{table}[h!]
\centering
\caption{Ablation Study on Regularization Parameter $\beta$}
\label{tab:beta_ablation_designed}
\begin{tabular}{@{}lcc@{}} 
\toprule
\textbf{Metric} & \textbf{DPO (\%)} & \textbf{CDPO (\%)} \\ 
\midrule
\multicolumn{3}{@{}l}{\textit{ $\beta = 0.1$}} \\ 
\quad Outputs Preferred Response & 51.1\% & \textbf{51.6\%} \\
\quad Outputs Preferred Response (WTP Weighted) & 52.8\% & \textbf{56.7\%} \\
\quad Outputs Preferred Response (Importance Weighted) & 46.0\% & \textbf{49.0\%} \\
\midrule
\multicolumn{3}{@{}l}{\textit{$\beta = 0.01$}} \\ 
\quad Outputs Preferred Response & 50.0\% & \textbf{65.0\%} \\
\quad Outputs Preferred Response (WTP Weighted) & 47.2\% & \textbf{75.6\%} \\
\quad Outputs Preferred Response (Importance Weighted) & 47.7 & \textbf{65.6} \\
\midrule
\multicolumn{3}{@{}l}{\textit{$\beta = 0.25$}} \\ 
\quad Outputs Preferred Response & \textbf{34.0\%} & \textbf{34.0\%} \\ 
\quad Outputs Preferred Response (WTP Weighted) & 22.8\% & \textbf{28.2\%} \\
\quad Outputs Preferred Response (Importance Weighted) & 24.78\% & \textbf{26.9\%} \\
\bottomrule
\end{tabular}
\end{table}

\subsection{Objective Benchmark Results}
We compare DPO and CDPO on objective benchmarks to supplement our preference-based metrics. On the objective benchmarks such as TruthfulQA, AGIEval and MMLU, the comparisons are closer than on preference-based metrics, with perhaps a slight advantage for CDPO. This result indicates most of the benefit of CDPO is preference-based rather than learning new capabilities or facts, which is expected for a post-training, preference-based method. 

\begin{table}[h!]
    \centering
    \caption{CDPO exceeds DPO by prioritizing important cases.}
    \begin{tabular}{lcc}
        \toprule
        TruthfulQA & 33.3\% & \textbf{33.7\%} \\
        MMLU & \textbf{56.7}\% & 56.4\% \\
        AGIEval & 30.5\% & \textbf{30.6\%} \\
        \bottomrule
    \end{tabular}
    \label{tab:model_comparison}
\end{table}

\subsection{Table 2 in Graph Form}
To parse how CDPO prioritizes vs. DPO, we turn part of Table 2 into Figure \ref{fig:final}. Here, we can see that on substantive issues CDPO exceeds DPO but on style DPO greatly exceeds CDPO. This supports the hypothesis that DPO focuses on unimportant improvements while CDPO puts more optimization pressure on important, core improvements.

\newpage 
\clearpage

\begin{figure}[h!]
    \centering
    \includegraphics[width=\linewidth]{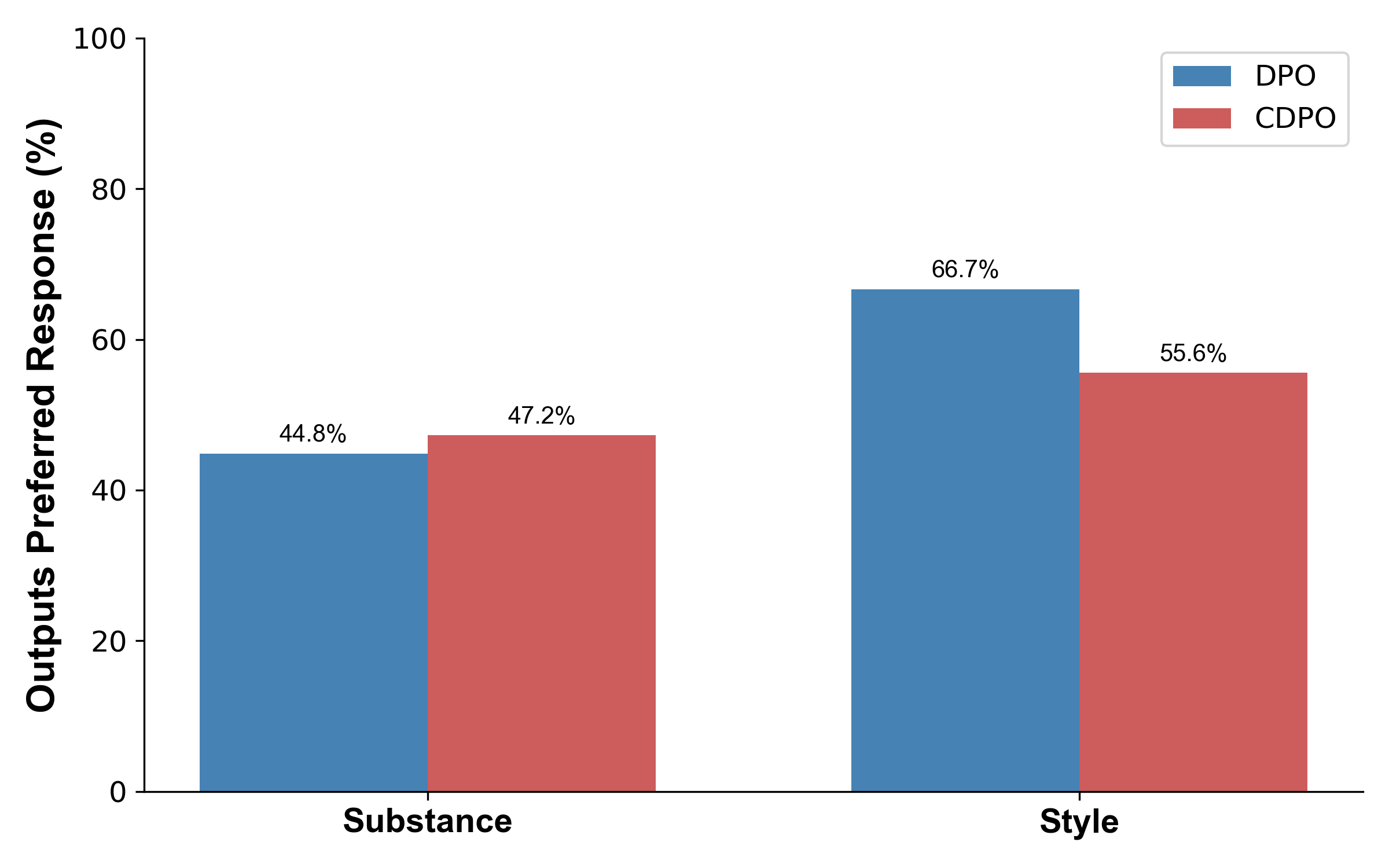}
    \caption{CDPO exceeds DPO on substantive improvements, while DPO exceeds on style improvements.  }
    \label{fig:final}
\end{figure}

\newpage
\section*{NeurIPS Paper Checklist}


\begin{enumerate}

\item {\bf Claims}
    \item[] Question: Do the main claims made in the abstract and introduction accurately reflect the paper's contributions and scope?
    \item[] Answer: \answerYes{} 
    \item[] Justification: The abstract and intro show the central contribution is about using richer feedback in RLHF/DPO to get better performance. 

\item {\bf Limitations}
    \item[] Question: Does the paper discuss the limitations of the work performed by the authors?
    \item[] Answer: \answerYes{} 
    \item[] Justification: This is discussed in the conclusion and in the experimental section where we discuss the weakness of each experiment. We also discuss the cost/time increase of using cardinal labels.  

\item {\bf Theory assumptions and proofs}
    \item[] Question: For each theoretical result, does the paper provide the full set of assumptions and a complete (and correct) proof?
    \item[] Answer: \answerYes{}
    \item[] Justification: All theorem statements are complete and the proofs are in the appendix. The full set of assumptions are always stated in the theorem or stated as assumed throughout the paper. 

    \item {\bf Experimental result reproducibility}
    \item[] Question: Does the paper fully disclose all the information needed to reproduce the main experimental results of the paper to the extent that it affects the main claims and/or conclusions of the paper (regardless of whether the code and data are provided or not)?
    \item[] Answer: \answerYes{}{} 
    \item[] Justification: Appendix describes our data collection, and training procedure exactly. 

\item {\bf Open access to data and code}
    \item[] Question: Does the paper provide open access to the data and code, with sufficient instructions to faithfully reproduce the main experimental results, as described in supplemental material?
    \item[] Answer: \answerYes{} 
    \item[] Justification: We have open sourced our core dataset. It is available at \url{https://huggingface.co/datasets/cardinal-prefs/CardinalPrefs}. There are sufficient instructions to replicate all experimental results, although we have not open sourced our code yet. 

\item {\bf Experimental setting/details}
    \item[] Question: Does the paper specify all the training and test details (e.g., data splits, hyperparameters, how they were chosen, type of optimizer, etc.) necessary to understand the results?
    \item[] Answer: \answerYes{}{} 
    \item[] Justification: Yes, training details are in the appendix. 

\item {\bf Experiment statistical significance}
    \item[] Question: Does the paper report error bars suitably and correctly defined or other appropriate information about the statistical significance of the experiments?
    \item[] Answer: \answerYes{}{} 
    \item[] Justification: Experiment one shows statistical significance, the result of the experiments are graphical or benchmarks scores which are not amenable to statistical significance testing. 

\item {\bf Experiments compute resources}
    \item[] Question: For each experiment, does the paper provide sufficient information on the computer resources (type of compute workers, memory, time of execution) needed to reproduce the experiments?
    \item[] Answer: \answerYes{} 
    \item[] Justification: Yes, we describe using one A100 throughout the paper. 
    
\item {\bf Code of ethics}
    \item[] Question: Does the research conducted in the paper conform, in every respect, with the NeurIPS Code of Ethics \url{https://neurips.cc/public/EthicsGuidelines}?
    \item[] Answer: \answerYes{} 
    \item[] Justification: To our knowledge it does.

\item {\bf Broader impacts}
    \item[] Question: Does the paper discuss both potential positive societal impacts and negative societal impacts of the work performed?
    \item[] Answer: \answerYes{}{} 
    \item[] Justification: See the impact statement at the end. 
    
\item {\bf Safeguards}
    \item[] Question: Does the paper describe safeguards that have been put in place for responsible release of data or models that have a high risk for misuse (e.g., pretrained language models, image generators, or scraped datasets)?
    \item[] Answer: \answerNo{} 
    \item[] Justification: Our models are not frontier or sensitive. For our data release, data is anonymized to protect the identity of labelers. 

\item {\bf Licenses for existing assets}
    \item[] Question: Are the creators or original owners of assets (e.g., code, data, models), used in the paper, properly credited and are the license and terms of use explicitly mentioned and properly respected?
    \item[] Answer: \answerYes{} 
    \item[] Justification: Whenever we use data, we properly credit the source. 

\item {\bf New assets}
    \item[] Question: Are new assets introduced in the paper well documented and is the documentation provided alongside the assets?
    \item[] Answer: \answerYes{}
    \item Justification: When introducing novel data, we note that. 

\item {\bf Crowdsourcing and research with human subjects}
    \item[] Question: For crowdsourcing experiments and research with human subjects, does the paper include the full text of instructions given to participants and screenshots, if applicable, as well as details about compensation (if any)? 
    \item[] Answer: \answerYes{} 
    \item[] Justification: Yes, see the appendix for the presented GUI, compensation details and labeling instructions. 

\item {\bf Institutional review board (IRB) approvals or equivalent for research with human subjects}
    \item[] Question: Does the paper describe potential risks incurred by study participants, whether such risks were disclosed to the subjects, and whether Institutional Review Board (IRB) approvals (or an equivalent approval/review based on the requirements of your country or institution) were obtained?
    \item[] Answer: \answerYes{} 
    \item[] Justification: We have IRB approval. As study participants were providing non-sensitive data, it is low to no risk. 

\item {\bf Declaration of LLM usage}
    \item[] Question: Does the paper describe the usage of LLMs if it is an important, original, or non-standard component of the core methods in this research? Note that if the LLM is used only for writing, editing, or formatting purposes and does not impact the core methodology, scientific rigorousness, or originality of the research, declaration is not required.
    \item[] Answer: \answerYes{} 
    \item[] Justification: We only use LLMs for formatting and code assistance except for when we use Claude to label data as described in the text. 
\end{enumerate}

\end{document}